\newcommand{\vect}{\mathbf}
\theoremstyle{plain}
\newtheorem{theorem}{Theorem}[section]
\newtheorem{lemma}[theorem]{Lemma}
\newtheorem{corollary}[theorem]{Corollary}
\theoremstyle{definition}
\newtheorem{definition}[theorem]{Definition}
\theoremstyle{remark}
\newtheorem{remark}[theorem]{Remark}
\newtheorem{fact}[theorem]{Fact}
\newcommand{\bu}{\boldsymbol{u}}
\newcommand{\bv}{\boldsymbol{v}}
\newcommand{\bw}{\boldsymbol{w}}
\newcommand{\be}{\boldsymbol{e}}
\newcommand{\mat}[1]{\boldsymbol{#1}}
\newcommand{\OO}{\mathcal{O}}
\newcommand{\E}{\ensuremath{\mathrm{E}}}
\definecolor{coral}{RGB}{255,187,162}
\definecolor{teall}{RGB}{0,212,212}
\newcolumntype{g}{>{\columncolor{coral}}c}
\newcolumntype{s}{>{\columncolor{teall}}c}
\newcommand{\Var}{\operatorname{Var}}
\definecolor{mypink}{RGB}{219, 48, 122}
\newcommand{\bx}{\boldsymbol{x}}
\newcommand{\bxi}{\boldsymbol{x_i}}
\newcommand{\bxii}{\boldsymbol{x_{i'}}}
\newcommand{\byi}{\boldsymbol{y_i}}
\newcommand{\byii}{\boldsymbol{y_{i'}}}
\newcommand{\bei}{\boldsymbol{e_i}}
\newcommand{\beii}{\boldsymbol{e_{i'}}}
\newcommand{\barX}{\bar{X}}
\newcommand{\barY}{\bar{Y}}
\newcommand{\barB}{\bar{B}}
\newcommand{\barZ}{\bar{Z}}
\newcommand{\bui}{\boldsymbol{u_i}}
\newcommand{\mO}{\mathcal{O}}
\definecolor{coral}{RGB}{255,187,162}
\definecolor{teall}{RGB}{0,212,212}
\title{Capturing the Denoising Effect of PCA via Compression Ratio}
\author{ Chandra Sekhar Mukherjee \thanks{Research supported by NSF CAREER award 2141536.}\\
	Thomas Lord Department of Computer Science\\
	University of Southern California\\
\texttt{chandrasekhar.mukherjee07@gmail.com} \\
\And
Nikhil Deorkar\\
Thomas Lord Department of Computer Science\\
	University of Southern California\\
\texttt{deorkar@usc.edu}\\
	\And
Jiapeng Zhang  \thanks{Research supported by NSF CAREER award 2141536.}\\
	Thomas Lord Department of Computer Science\\
	University of Southern California\\
\texttt{jiapengz@usc.edu}
}
\date{\today}
\begin{document}

\maketitle

\begin{abstract}
Principal component analysis (PCA) is one of the most fundamental tools in machine learning with broad use as a dimensionality reduction and denoising tool. In the later setting, while PCA is known to be effective at subspace recovery and is proven to aid clustering algorithms in some specific settings, its improvement of noisy data is still not well quantified in general. 

In this paper, we propose a novel metric called \emph{compression ratio} to capture the effect of PCA on high-dimensional noisy data.
We show that, for data with \emph{underlying community structure}, PCA significantly reduces the distance of data points belonging to the same community while reducing inter-community distance relatively mildly. We explain this phenomenon through both theoretical proofs and experiments on real-world data. 

Building on this new metric, we design a straightforward algorithm that could be used to detect outliers. Roughly speaking, we argue that  points that have a  \emph{lower variance of compression ratio} do not share a \emph{common signal} with others (hence could be considered outliers).

We provide theoretical justification for this simple outlier detection algorithm and use simulations to demonstrate that our method is competitive with popular outlier detection tools. Finally, we run experiments on real-world high-dimension noisy data (single-cell RNA-seq) to show that removing points from these datasets via our outlier detection method improves the accuracy of clustering algorithms. Our method is very competitive with popular outlier detection tools in this task. 
\end{abstract}

\section{Introduction}
Principal component analysis, commonly known as PCA, is one of the most fundamental tools in machine learning. PCA is primarily used as a dimensionality reduction tool that transforms high-dimensional data to lower dimensions for better visualization as well as a heuristic that reduces the complexity of the algorithms that are to be run on the data. On the other hand, it is also known to have certain denoising effects on high-dimensional data. This denoising phenomenon has been observed in different domains, including biological data~\cite{kiselev2019challenges,Kiselev-SC-consensus}, speech data~\cite{PCA-speech-denoising}, signal measurement data~\cite{PCA-signal-denoising, PCA-noise-chemistry}, image data~\cite{PCA-image-denoising} among others. The denoising effect of PCA has been extensively studied over the last decades~\cite{PCA-asymptotic-1,PCA-asymptotic-2,PCA-asymptotic-3,PCA-asymptotic-4,PCA-finite-Nadler, PCA-finite-Nadakuditi,vaswani2017finite,mao2023power,mukherjee2024detecting}. 

One of the most fundamental problems in unsupervised learning is the analysis of data in the presence of community structures~\cite{unsupervised}. 
This includes clustering of such data~\cite{clustering-algo}, visualization~\cite{umap-clustering}, outlier detection~\cite{outlier-community}, and others. The primary progress in understanding the denoising effect of PCA has been solely in clustering, particularly in connection to the K-Means algorithm~\cite{Ding-Kmeans-PCA,PCA+KMeans,PCA+Kmeans2}, where PCA in combination with a K-Means based iterative algorithm is shown to provide a good clustering of that dataset with mild assumptions with the underlying community structure. 

However, PCA seems to have a more ``general'' denoising effect in data, as it improves the performance of various downstream algorithms, including clustering~\cite{Kiselev-SC-consensus} as well community structure preserving graph embedding~\cite{Seurat} and this denoising effect is evident in many real-world datasets. 

\subsection{Contributions}
To this end, we propose a metric, called {\it compression ratio}, that quantifies PCA's improvement of high dimensional noisy data with underlying community structure in a geometric, and thus algorithm-independent manner
\footnote{From hereon, we use the word community to refer to the underlying structure of the data, whereas clustering of data refers to the outcome of a particular clustering algorithm on the dataset.}.

\paragraph{Compression ratio.}
Let $\bu$ and $\bv$ be two data points from a dataset and let $\Pi_t$ be the $t$-dimensional PCA projection operator. Then the compression ratio between the two points is defined as the ratio between their pre-PCA and post-PCA distance, i.e. 
\[
\frac{\|\bu-\bv\|}{\|\Pi_t(\bu)-\Pi_t(\bv)\|}.
\]
In a dataset with a community structure, we show the compression ratio for intra-community pairs is higher than that of inter-community pairs even in settings where the pre-PCA inter-community and intra-community distances are very similar. We demonstrate (through a \emph{random vector mixture model}) that this ratio gap reflects the denoising effect of PCA. As a consequence, PCA brings points from the same community much closer, improving the performance of downstream algorithms such as K-Means.

As a motivating byproduct, we show that this metric can be used to design an outlier detection method that can detect points deviating from a community structure. Furthermore, we show that this method can improve the accuracy of clustering algorithms in real-world high-dimensional datasets. 

\paragraph{Outlier detection method.}
Our outlier detection is a simple process inspired by compression ratio. Intuitively, any data point that belongs to an underlying community should have large compression ratios
with many points from the same community, whereas it will have a lower compression ratio w.r.t inter-community points. On the other hand, outliers will have more similar compression ratios with all the other points. This difference can be captured by the variance of the list of compression ratios between one point and all of the other points, with outliers having a lower variance of compression. Thus our algorithm simply removes points with low variance of compression. 

We analyze this simple algorithm in an extension of the standard random vector mixture model. We also compare our algorithm with popular algorithms such as the Local Outlier Factor (LOF) method~\cite{LOF} and KNN-dist \cite{KNN-dist} as well as more recent methods such as Isolation forest~\cite{isolation-forest} and ECOD~\cite{ECOD} through both simulations and experiments on real-world data. We show that this simple algorithm is very competitive with those popular outlier detection tools.

Overall, we believe the effect of PCA on denoising becomes more significant if 
for each datapoint, there are many data points with large compression ratio variance.

\paragraph{Real world experiments}
Finally, we test the relevance of compression ratio as a metric and the outlier detection method in real-world data. We focus on single-cell data, as it is both high dimensional ($20,000-40,000$ dimension)
and noisy~\cite{kiselev2019challenges}, using datasets from a popular benchmark database~\cite{duo2020} with ground truth community labels. We first show that the average intra-community compression ratio is higher than the average inter-community compression ratio in all of the datasets. We then show that removing outliers in these datasets via our variance of compression technique improves the performance of clustering algorithms, such as PCA+K-Means, where we again outperform standard outlier detection methods.

\paragraph{Organization of the paper}
In Section~\ref{sec: main}, we discuss our theoretical analysis. Concretely, we define the random vector mixture model and provide bounds on the intra-community and inter-community compression ratios. Next, we define our outlier detection metric and justify it in an extension of our generative model. 
Section~\ref{sec: simulations} contains the simulation results validating the compression ratio metric and we also compare the performance of our outlier detection method with other methods. 
Finally in Section~\ref{sec: experiments} we demonstrate that PCA exhibits an average version of compression ratio in real-world biological datasets~\cite{duo2020} and then test our outlier detection-based clustering accuracy improvement idea discussed above. 

\subsection{Related Works}

PCA and its effect on noisy data has been subject to a lot of investigation in the last 50 years. Before 2008, most of the work focused on the asymptotic setting, where the number of points ($n$) and/or the dimension ($d$) are infinite~(see \cite{PCA-asymptotic-1,PCA-asymptotic-4,PCA-asymptotic-2,PCA-asymptotic-3} and the references therein). In the last two decades, several works have also considered the finite sample setting~\cite{PCA-finite-Nadler, PCA-finite-Nadakuditi}. These works have primarily focused on the denoising aspect of PCA in different variants of Gaussian noise. In a recent line of work~\cite{vaswani2017finite} has studied the subspace recovery problem in the presence of bounded and (nice) sub-Gaussian noise. However, there seems to be no direct way to convert these results into a clustering setting. In comparison, we study PCA's denoising effect on data in random vector mixture model via the compression ratio metric, where the noise can be heavy sub-Gaussian. 

\paragraph{PCA in clustering tasks}
With regards to PCA's impact on data with community structure, the primary work has been in connection to K-Means, which is also known as spectral clustering. Here, one of the first works~\cite{Ding-Kmeans-PCA} showed that the outcome of PCA can be viewed as an approximation result to the K-Means outcome in clustering data. In this direction, a lot of progress has been made in the last two decades.

A beautiful recent work~\cite{PCA+KMeans} has shown that PCA followed by several iterations of K-Means along with modifications can cluster data with reasonable parameters in the random vector mixture model that we discuss here, which was then improved in~\cite{PCA+Kmeans2}. Both of the works focused on the setting of $n \gg d$ (for example, \cite{PCA+KMeans} worked with $n \ge d^8$). More recently, tighter results have been obtained in the context of the Gaussian-mixture model in \cite{spectral-GMM-optimality} (still on the setting of $n \gg d^2$).

In comparison, we study PCA's relative compression in an algorithm-independent fashion, focusing on its effect on the geometry of the data in the high-dimensional setting of $n=\Omega(d)$ with sub-Gaussian noise. 
We are motivated to analyze this setting as single-cell datasets often have $n<d$.
Furthermore, we extend our analysis to a random-mixture-with-outliers model, whereas spectral clustering is known to be very sensitive to the presence of outliers.


\section{Random vector model and relative compression of PCA}
\label{sec: main}

To theoretically study the relative compression of PCA, we use a high-dimensional mixture model, similar to one in \cite{PCA+KMeans,PCA+Kmeans2}.
We call this a random vector mixture model. This can also be interpreted as a signal-plus-noise model where the signal imposes a community structure on the data. The dataset of interest is a set of $n$ many $d$ dimensional real vectors $\vect{x_i} \in \mathbb{R}^d,  1 \le i \le n$, which is together represented as the dataset $X$. We express $X$ as a $d \times n$ matrix,  with each column representing a data point. The data points have an underlying hidden community structure that is expressed as a partition of $[n]:=\{1,  \ldots , n \}$ into $k$ many sets $V_1,  \ldots ,V_k$  such that each $i \in [n]$ lies in any one $V_j$. We then have the following problem structure.

\begin{enumerate}
\item Each cluster $V_j,  1 \le j \le k$ is associated with a ground truth center ${\bf c_j} \in \mathbb{R}^d$.

\item Additionally,  each cluster $V_j$ is associated with a distribution $\mathcal{D}^{(j)}$ such that $\mathcal{D}^{(j)}$ is a \emph{coordinate wise independent} \emph{zero mean} distribution. For ease of exposition, we define the support of $\mathcal{D}^{(j)}$ to be $[-\alpha,\alpha]^d$ for some $\alpha$ (which can also depend on $n,d$), but our methods also directly apply to sub-Gaussian distributions where each coordinate has a constant sub-Gaussian norm (resulting in $\OO(\sqrt{d})$ norm of any column vector).
Then $\alpha$ would be replaced with $C' \log n$ for some constant $C'$ in our bounds.
\end{enumerate}

Then, the dataset $X$ is set up as follows.  

\begin{definition}[Random vector mixture model]
\label{def: mixture model}
For each $i \in [n]$,  if $i \in V_j$,  then $\bm{x_i}={\bf c_j}+\bm{e_i}$ where $\bm{e_i} \sim \mathcal{D}^{(j)}$,  i.e.  $\bm{e_i}$  is independently sampled from $\mathcal{D}^{(j)}$.  Here we abuse notation and denote both $i \in V_j$ as well as $\bm{x_i} \in V_j$. 
\end{definition}

With this setup, now we define the PCA projection operator and the compression ratio metric formally. 

\begin{definition}[The PCA operator $\Pi^{k'}_X$]
Let $X$ be a $d \times n$ matrix. Then the $k'$ dimensional PCA projection operator is simply the projection operator onto the first $k'$ principal components of $X$.     
\end{definition}

Next we formally define the compression ratio metric. 

\begin{definition}
For any pair $(i,i')$ we define the $k'$-PC compression of the pair of vectors in $X$   as
\[
\Delta_{X,k'}(i,i')=\frac{\|\bxi-\bxii \|}{\| \Pi^{k'}_X(\bxi)-\Pi^{k'}_X(\bxii)\|}
\]
\end{definition}

Before describing our results, we define certain parameters of the model.
\begin{enumerate}
    \item The maximum variance of the entries, $\sigma$ is defined as 
    $\sigma^2= \max_{1 \le j \le j, 1 \le \ell \le d} \Var[\mathcal{D}^{(j)}_{\ell}]$

    \item The average variance of a column in a distribution $\mathcal{D}^{(j)}$, noted as $\sigma_j$ is defined as 
    $\sigma_j^2=\frac{1}{d}\sum_{\ell} Var([\mathcal{D}^{(j)}_{\ell}])$.

    Here, $\sigma_j\sqrt{d}$ is the perturbation on the data points of $V_j$ due to the noise.

    
\end{enumerate}

In this direction, we first lower, and upper bound the  $(k-1)$-PC intra-community and inter-community compression ratios respectively, as a function of the maximum variance, average variances, spectral structure of the noise and signal, and distance between the centers of the model, which can be found in Theorem~\ref{thm: main}. 

Although our result applies to any set of centers, the spectral properties of the resultant matrix, and their interactions make the result hard to interpret. To give more insight into our bounds, we instead define a restricted (still natural) structure on the centers, which allows us to give a more interpretable result in this case. For simplicity, we also work in the setting where $d \ge 10\alpha\sqrt{d}\log n$.

\begin{definition}[Spatially unique centers]
We say a set of vectors ${\bf C}=\{ {\bf c_1}, \ldots ,{\bf c_k} \}$ are $\gamma$-spatially unique, if we have that
\[
\underset{1 \le j \le k}{\min} ~~ 
\underset{{\bf v} \in {\sf Span}(\bf{C} \setminus {\bf c_j})}{\min} \|{\bf c_j} - {\bf v} \| \ge \gamma
\]
\end{definition}

This implies, that each center has a unique pattern, that cannot be approximated by a combination of the other centers. Here note that $\gamma \ge \min_{j \ne j'}\|{\bf c_j}-{\bf c_{j'}} \|$.

For example, such a property is expected if the centers are mutually orthogonal. One can also think of them as vertices in a high-dimensional regular polygon. 

\begin{theorem}[Relative compression with spatially unique centers] 
\label{thm: main-centers}
Let $X$ be a $d \times n$ dataset $k$ many $\gamma$-spatially unique centers where the size of the smallest community is $\Omega(n/k)$. Then there is a constant $C_1$ such that 
for all intra-community pairs in $V_j$, the compression ratio is upper-bounded as

\begin{align}
\label{eq:1}
& \Delta_{X,k-1}(i,i') \ge 
\frac{\sigma_j\sqrt{d}}
{
C_{1} \left(\sigma  \sqrt{k}+\alpha\sqrt{\log n}+ \frac{2\sigma \cdot \sigma_j \cdot k \sqrt{d}}{{\sf \gamma}} \right)
}
\end{align} 
Similarly for any $i \in V_{j}$ and $i' \in V_{j'}$, the inter-community compression ratio is upper-bounded as 

\begin{align}
\label{eq:2}
&\Delta_{X,k-1}(i,i') \le  \nonumber \\
&
\frac{\sqrt{(\sigma_j^2+\sigma_{j'}^2)d^2+ \|{\bf c_j}-{\bf c_{j'}}\|^2}}
{
C_{1} \left( 
\|{\bf c_j}-{\bf c_{j'}}\|-  2\sigma \sqrt{k} -\alpha\sqrt{\log n} - \frac{\sigma \cdot \sqrt{\sigma_j^2+\sigma_{j'}^2} \cdot k \cdot \sqrt{d}}{\sf \gamma} \right)
}
\end{align}
with probability $1-\mO(1/n)$. 
\end{theorem}

A natural question is to ask whether post-PCA distance itself is a good metric to capture denoising due to PCA. In this regard, we point out that compression ratio has an added normalization property. For example, consider the case where all pair-wise center distances are the same. In such a case, the post-PCA distances are dependent on $\sigma_j$, so communities with larger variances have larger intra-community distances. However, this gets normalized in the compression factor as per Equation~\eqref{eq:1}, as the numerator also has a dependency on $\sigma_j$.

\paragraph{Non-triviality of parameters}
Next, we briefly discuss our parameters in the spatially unique center setting. 
We have the following corollary in this setting.

\begin{corollary}
Let $\gamma \ge C\max \{ \sigma \sqrt{k} d^{1/4},\sigma\sqrt{k}+\alpha \log n\}$ 
and $\max_{i,j} \|\mathbf{c_i}-\mathbf{c_j}\| \ll \sigma\sqrt{d}$ for some constant $C$. Furthermore, let $i \sim i'$ denote that $\byi$ and $\byii$ belong to the same underlying community. Then, the following holds.

\begin{enumerate}
    \item The perturbation of the points due to noise is much larger than the distance between the community centers. 

    \item  With probability $1-\OO(1/n)$,
    \[
    \min_{(i,i'): i \sim i'} \Delta_{X,k-1}(i,i')  \ge 4\cdot 
    \max_{(i,i'): i \nsim i'} \Delta_{X,k-1}(i,i') 
    \]
\end{enumerate}
\end{corollary}

This shows that the compression ratio of PCA provides a separation between intra-community and inter-community pairs even in a setting where the noise highly dominates the distance between the centers. Furthermore, in settings where $\|{\bf c_j} -{\bf c_{j'}}\|=\gamma$ (such when all the centers are orthogonal), then we only need $\sigma\sqrt{d} \gg \gamma \ge C\max \{ \sigma \sqrt{k} d^{1/4},\sigma\sqrt{k}+\alpha \log n\}$ to observe this non-trivial separation.

\subsection{Outlier detection with compression ratio}
Now, we discuss the usefulness of compression ratio on outlier detection. We first describe the notion of variance of compression ratio. 

\begin{definition}[Variance of compression ratio]
Given a dataset $X$ and a PCA dimension $k'$, variance of compression ratio of a point $\bu \in$ X is defined as 
\begin{align}
\label{eq: VAR}
{\sf VAR}\Delta_{X,k'}(\bxi)=
{\Var}(\{ \Delta_{X,k'}(i,i')\}_{i' \ne i})    
\end{align}

where 
$\Delta_{X,k'}(i,i')= \frac{\|\bxi-\bxii \|}{\|\Pi^{k'}_X(\bxi)-\Pi^{k'}_X(\bxii)\|}$ is the compression ratio between points $i$ and $i'$.
\end{definition}

That is, it is simply the variance of the list of compression ratios of $\bxi$ with all the other points $\bxii$.

Then, our intuition is that if data consists of many points from the high dimensional mixture model, as well as several outlier points that don't share a common signal (center), they have a lower variance of compression ratio. We concretize this notion with the following simple detection algorithm~\ref{alg:outlier}.

\begin{algorithm}[tb]
   \caption{Outlier detection via variance of compression ratio}
   \label{alg:outlier}
\begin{algorithmic}
   \STATE {\bfseries Input:} data $X$, PCA dimension $k'$, number of outliers $o$.

    \FOR{$i=1$ {\bfseries to} $n$}

    \STATE $SC[i] \gets {\sf VAR}\Delta_{X,k'}(\bxi)$ \COMMENT{${\sf VAR}\Delta_{X,k'}$ defined in Eq.~\ref{eq: VAR}}

    \ENDFOR

    \RETURN $o$ many indexes with lowest $SC$ values.
\end{algorithmic}
\end{algorithm}

\paragraph{Mixture-model with outliers}

Now let us consider an extension of the mixture model in Definition~\ref{def: mixture model} to incorporate outliers. 
\begin{definition}[Mixture model with outliers]
\label{def: outlier}
Let $X$ be a $d \times n$ dataset with the partition $V_1, \ldots ,V_k, \hat{V}$, a set of $k$ centers $\{{\bf c_j}\}_{j=1}^k$ and distributions $\{ \mathcal{D}^{(j)} \}_{j=1}^k+1$ with the following generation method. 

\begin{enumerate}
    \item {\it clean points:} If $i \in V_j, 1 \le j \le k$, 
    $\bxi= {\bf c_j}+ \bm{e_i}$ where $\bm{e_i}$ is sampled from $\mathcal{D}^{(j)}$.
    
    \item {\it outliers:} If $i \in \hat{V}$, 
    then we sample $p_{i,1}, \ldots p_{i,k} \in [0.5,1]$. Then
    $\bui= \sum_{j}\alpha_{i,j}{\bf c_j}+ \bm{e_i}$ where 
    $\alpha_{i,j}=\frac{p_{i,j}}{\sum_{j} p_{i,j}}$ and 
    $\bm{e_i}$ is sampled from $\mathcal{D}^{(k+1)}$.

\end{enumerate}

Let $|\hat{V}|=n_o$ and $n=n_o+n_c$. To keep the results simple, we make the average variance of each distribution $\mathcal{D}^{(j)}$ same, which is $\sigma'$.
\end{definition}

Such a scenario can occur in many different settings. For example, consider single-cell datasets which is a popular biological data type. Here each data point is a cell and the features (which are high, such as $20,000$) are specific gene expressions, A primary task here is to obtain cell sub-populations~\cite{single-cell-development,Kiselev-SC-consensus,kiselev2019challenges}. Although the gene expressions within sub-populations should have similarities, they are perturbed by biological and technical noise, making high-dimensional mixture models a good setup to study them. However, some cells may not belong to any particular sub-populations, but rather be intermediate cells. Additionally, sometimes cells get merged during the biological experiment that records the gene expressions, generating data points that behave like a random mixture of two or more data points. Our model aims to model such scenarios.


In this setting, we get the following outlier detection result where the centers have spatially unique centers.

\begin{theorem}[Outlier detection via Algorithm~\ref{alg:outlier}]
\label{thm: outlier}
Let $X$ be a $d \times n$ dataset with $\gamma$-spatially unique $k$ many centers
where $\log n \le k\le \sqrt{d}$
and  $n_0$ outliers in the setting of Definition~\ref{def: outlier}.
Let the following conditions hold
\begin{enumerate}
    \item $\|{\bf c_j} -{\bf c_{j'}}\|=\OO(\sigma'\sqrt{d})$
    ~~~(the noise is significant)
    \vspace{-0.5em}    
    \item $\gamma \ge 2 C' \sigma^{3/2}/{\sigma'} \cdot k \cdot d^{1/4} \log n $
\end{enumerate}
Then, for any $n_0 \le n/2$, the first $n_0$ points ranked by Algorithm~\ref{alg:outlier} all belong to the outlier group ($\hat{V}$) with probability $1-o(1)$.
\end{theorem}

We discuss the connection between our results and the role of spatially unique centers in Appendix~\ref{sec: spatially-unqiue}. The proof of  Theorems~\ref{thm: main-centers} and~\ref{thm: outlier} can be found in Appendix~\ref{thm:center-proof} and \ref{thm: outlier-proof} respectively.

This gives us initial theoretical evidence that in the random-mixture model with outliers, our simple outlier detection method can detect outliers when a non-negligible fraction of the points are outliers. Next, we use simulations of our model to test the efficacy of our outlier detection method and its impact on the community structure of data and compare them with some popular outlier detection methods.

\section{Simulations for outlier detection}
\label{sec: simulations}

In this section, we first describe different instantiations of the random-vector mixture model, observe the intra-community and inter-community compression ratios in them, and then run simulations in the outlier mode. 

\paragraph{Simulation setup}
For this setup, we set $n=3000,d=1000$, and $k=3$, with each community having the same number of points. For simplicity, we choose $3$ equidistant centers, with $\|{\bf c_i} -{\bf c_j} \|={\bf c}$. We set the noise distributions to be Bernoulli distributions with variance $\sigma_1, \sigma_2, \sigma_3$ respectively. We work in two primary settings, of equal and unequal noise.

i) {\it Equal noise}.
    Here we have $\sigma_j=\sigma$ for all $i$. 
ii){\it Unequal noise}. Here one of the communities has variance $2\sigma$, whereas all the other communities have variance $\sigma$. 

Then, we test the algorithms for three values of 
$\sigma$ in the following manner.
\begin{itemize}
    \item {\it Low noise:} We choose $\sigma: \| {\bf c_j} -{\bf c_{j'}}\| \approx 3\sigma\sqrt{d}$. This implies distance between the centers dominates the perturbation due to noise. 
    \vspace{-0.5em}
    \item {\it Significant noise:} Here $\sigma: \| {\bf c_j} -{\bf c_{j'}}\| \approx \sigma\sqrt{d}$. Here the noise norm and distance between centers are of the same order.
    \vspace{-0.5em}
    \item {\it High noise:} We have $\sigma: \|{\bf c_j} -{\bf c_{j'}} \| \approx 0.3\sigma\sqrt{d}$. Here the noise heavily dominates the center distances.
\end{itemize}

Let us look at the equal noise setting, i.e. the case where the variance of noise distributions for all communities are the same. We observe that in the low-noise setting, all intra-community compression ratios are higher than 
all inter-community compression ratios. As the noise increases, the gap between them decreases, so that in the high-noise setting, there is now an overlap between intra-community and inter-community compression ratios. We demonstrate this in Figure~\ref{fig:compression}. This further indicates that compression ratio is indeed a useful metric even when the noise has a strong perturbation effect on the data (even though there will be no clean separation between intra-community and inter-community compression ratios once the noise is very high).

\begin{figure}
    \centering
    \includegraphics[scale=0.4]{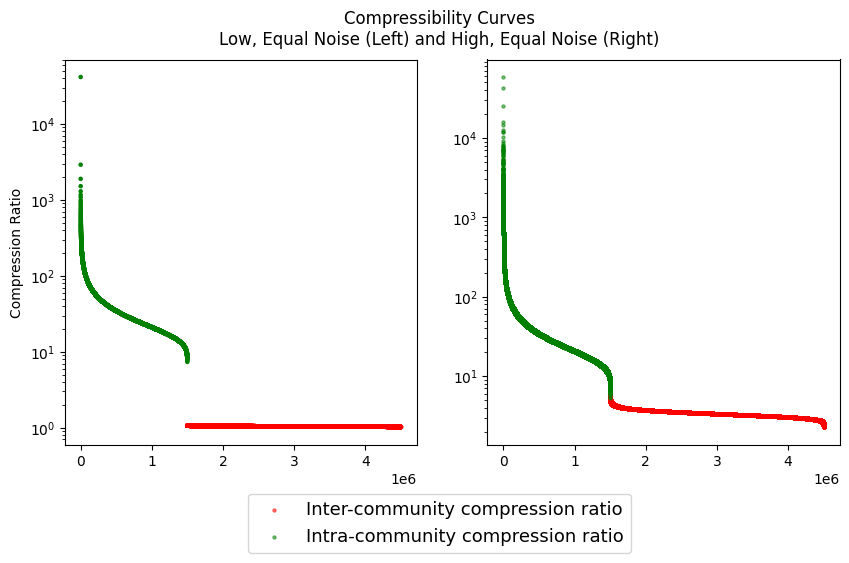}
    \caption{Comparing intra and inter community compression ratios in simulation}
    \label{fig:compression}
\end{figure}


\subsection{Outlier detection}

Now, we discuss the outlier detection, starting with the simulation setup in this case. We follow the random-mixture-outlier model and add outlier points along with the clean points as follows.

We add $n_o=n_c/10$ outliers following definition~\ref{def: outlier}. That is, we randomly choose $\alpha_1, \ldots ,\alpha_3$ and then a random mixture-center is chosen as
$\sum_{j}\alpha_j{\bf c_j}$, and then we add a random noise vector from the Bernoulli distribution of variance $\sigma_4$.

\paragraph{Outlier detection algorithms for comparisons}
Outlier detection has been an active area of study in unsupervised learning, providing several influential algorithms. In a recent, comprehensive benchmarking of outlier detection algorithms, \cite{han2022adbench} compared the performance of several unsupervised learning algorithms on different datasets. They found that for unsupervised outlier detection methods, success was related to whether the underlying model of the data assumed by the outlier detection method followed the dataset at hand. They found that for local outliers, the popular Local Outlier Factor (LOF) method~\cite{LOF} performed the best statistically, whereas for global outliers, KNN-dist (where the outlier score is simply the distance to the $K$-th nearest neighbors)~\cite{KNN-dist} performed the best. Owing to their generally impressive performance, we use them for comparison with our variance of compression method. 
Furthermore, we select a popular method called Isolation forest~\cite{isolation-forest} and also a very recent and popular outlier detection method ECOD~\cite{ECOD}. We also use PCA+method for each of the methods as benchmarks, as both outliers and clean points are perturbed by zero-mean noise, and we now understand PCA can help mitigate the effect of said noise, as discussed in Section~\ref{sec: main}.

\paragraph{Outlier detection results}

We compare the AUROC values of the $5$ outlier methods of interest in these settings. We record the results in Figure~\ref{fig:outlier-1} and \ref{fig:outlier-2} for the equal and unequal noise settings respectively.

\begin{figure}[ht]
\begin{subfigure}{.5\textwidth}
    \centering
    \includegraphics[scale=0.40]{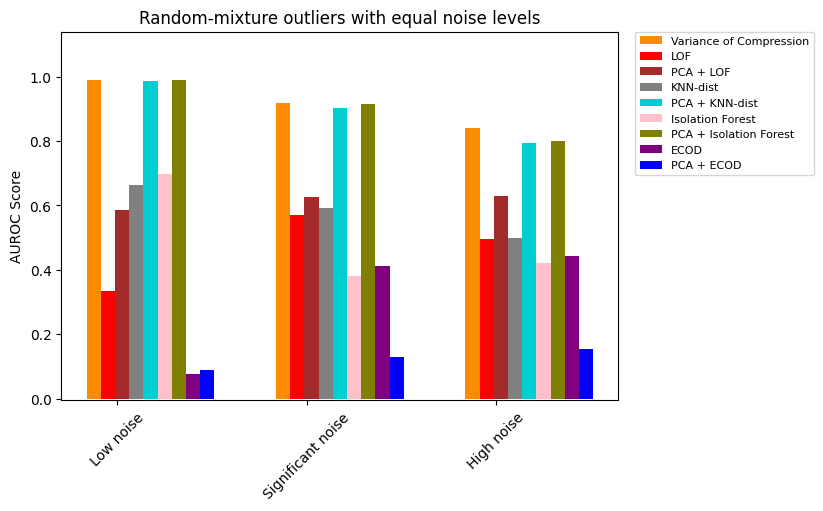}
    \caption{AUROC of outlier removal in equal noise setting}
    \label{fig:outlier-1}
    \end{subfigure}
\begin{subfigure}{.5\textwidth}
    \centering
    \includegraphics[scale=0.40]{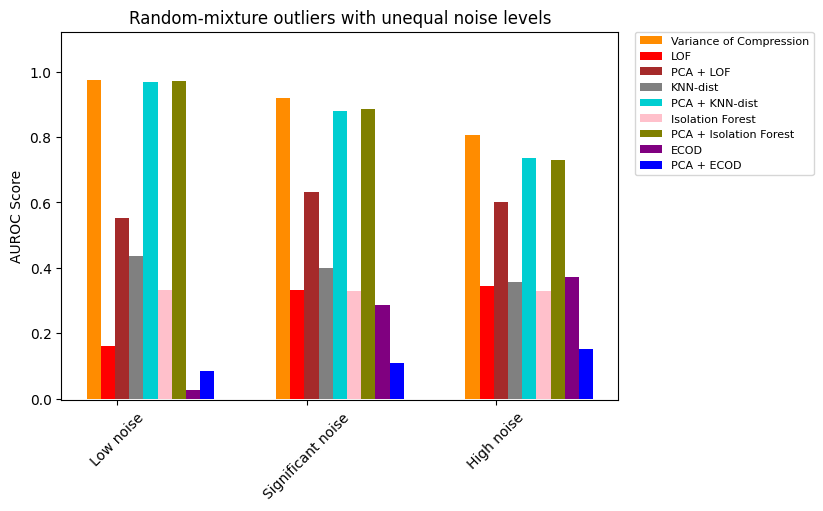}
    \caption{AUROC of outlier removal in unequal noise setting}
    \label{fig:outlier-2}
        \end{subfigure}
\end{figure}

As we can see, our method results in the highest AUROC value, followed by PCA+KNN-dist.
we make two observations.

i) The performance gap between variance-of-compression and the next best method is higher in the unequal noise setting.

ii)As the noise level increases, the gap between our method and PCA+K-NN dist increases. 

These two points further highlight the compression ratio's normalizing effect as well as effectiveness in high noise settings.

Here we remark that in real-world data, while some points may indeed behave like outliers, they need not all be the same kind of outlier. Thus, we would like to verify our method's performance in the presence of a different kind of outlier, which we concretize below.

\paragraph{Higher variance-based outliers}
We consider the case that some points may have significantly higher noise perturbations than others. In this setting, we randomly select some points, and we generate some points with $c\cdot \sigma$ coordinate-wise variance, where $c=8$ for our experiments (recall that the noise in the other points has a coordinate-wise $\sigma$ variance). It is well known that if noise is low-dimensional, then such outliers are well captured by LOF. We observe that while in the low noise setting our performance is worse than the other methods, as the overall noise increases, the performance of our method is more comparable to the other methods. We record the results in Figure~\ref{fig:outlier-high}.

\begin{figure}[ht]
    \centering
    \includegraphics[scale=0.40]{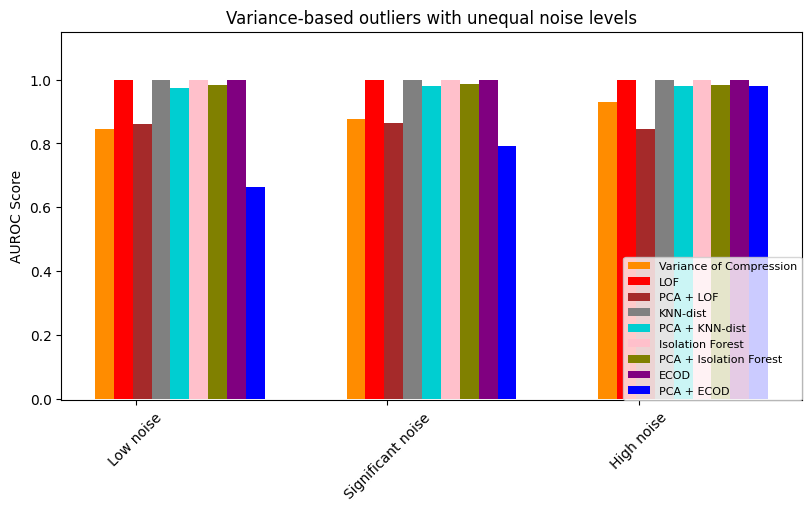}
    \caption{AUROC of variance-based-outlier removal}
    \label{fig:outlier-high}
\end{figure}

This shows that our outlier detection method is adept at detecting different kinds of outliers, outperforming popular outlier detection tools in some settings, and being competitive to them in others. We also observe that as the overall noise in the dataset increases, the performance of our method compared to the other outlier detection tools improves. This further highlights the power of compression ratio when especially dealing with noisy data.



Having demonstrated the validity of our outlier detection method in two different settings, across different noise levels, we now focus on real-world datasets.

\section{Real world experiments}
\label{sec: experiments}

\subsection{Datasets of interest}

In this section, we provide experimental results to exhibit the validity of compression ratio as a metric and the usefulness of our outlier detection method in improving the community structure of datasets. We focus on single-cell RNA sequencing datasets. The dataset consists of $n$ many data points, each corresponding to a cell. The features are gene expressions, and for the cell, the expression of some $d \ge 10,000$ genes are recorded. A fundamental problem here is to identify sub-populations of interest. However, the problem is challenging as the biological process of recording gene expressions is error-prone~\cite{single-cell-development}, and gene expressions within the same population may also vary due to internal randomness. Furthermore, experiments can cause cells to get merged during gene-expression recording~\cite{doublet-benchmark}. This makes single-cell RNA sequencing data a good testing ground for high dimensional noisy data with outliers and underlying community structure. 

In this direction, we consider the single-cell RNA sequencing datasets from a benchmark paper~\cite{duo2020}. These datasets also have moderate to highly reliable ground truth labels, that help us understand the usefulness of our metrics and our algorithm. These datasets vary in the number of cells, genes, clusters, cells per cluster, and the "difficulty" of clustering. A summary of the datasets is provided in Table \ref{data-summary} in Appendix~\ref{sec:summary}. 

\subsection{Average compression in the datasets}

As discussed in Section~\ref{sec: main} and described in Theorem~\ref{thm: main-centers}, our primary result showed that the intra-community compression ratios are higher than inter-community compression ratios in a large range of parameters. Here we look at average statistics of compression ratio to provide a first layer of evidence supporting this phenomenon in real-world data. We define the following metric. 
For any community $V_j$, we define the average intra-community compression ratio as
\[
{\sf intra}_{X,k'}(V_j)=\underset{{i,i' \in V_j}}{\mathbb{E}}\left[\frac{\|\bxi-\bxii\|}{\|\Pi_X^{k'}(\bxi-\bxii)\|}]\right]
\]
Similarly, the average inter-community compression ratio is defined as
\[
{\sf inter}_{X,k'}(V_j)=\underset{i \in V_j,i' \in [n] \setminus V_j}{\mathbb{E}}\left[\frac{\|\bxi-\bxii\|}{\|\Pi_X^{k'}(\bxi-\bxii)\|}\right]
\]

This provides an average measurement of the compression ratio in the dataset. In this regard, we find that for each of the 8 datasets and each of the communities in the dataset, the intra-community compression ratio is higher than the inter-community compression ratio. We provide the results in the Appendix~\ref{sec: appendix: average-compression}. 

Here, for brevity we present the average of ${\sf intra}_{X,k-1}(V_j)$ and  ${\sf inter}_{X,k-1}(V_j)$ for each dataset, which we present in Table $\ref{tab: core-compression}$.

\begin{table}[ht]
\begin{center}
\begin{tabular}{|c|c|c|}
\hline \rowcolor{white}
\hline \rowcolor{white}
\makecell{Dataset}
& \makecell{Avg. \\ intercluster \\ compression}
& \makecell{Avg. \\ intracluster \\ compression}\\ \hline
Koh   &  \cellcolor[HTML]{e7b0b0}2.539  &   \cellcolor[HTML]{b2ca9e}7.468     \\ \hline
Kumar   &  \cellcolor[HTML]{e7b0b0}1.969  &   \cellcolor[HTML]{b2ca9e}14.811     \\ \hline
Simkumar4easy  &  \cellcolor[HTML]{e7b0b0}3.577  &   \cellcolor[HTML]{b2ca9e}15.808    \\ \hline
Simkumar4hard    &  \cellcolor[HTML]{e7b0b0}5.267  &   \cellcolor[HTML]{b2ca9e}15.025     \\ \hline
Simkumar8hard   &  \cellcolor[HTML]{e7b0b0}4.384  &   \cellcolor[HTML]{b2ca9e}9.373    \\ \hline
Trapnell    &  \cellcolor[HTML]{e7b0b0}6.373  &   \cellcolor[HTML]{b2ca9e}9.857     \\ \hline
Zheng4eq    &  \cellcolor[HTML]{e7b0b0}2.399  &   \cellcolor[HTML]{b2ca9e}6.639    \\ \hline
Zheng4uneq  &  \cellcolor[HTML]{e7b0b0}2.215  &   \cellcolor[HTML]{b2ca9e}6.260    \\ \hline
Zheng8eq   &  \cellcolor[HTML]{e7b0b0}2.398  &   \cellcolor[HTML]{b2ca9e}4.722    \\ \hline
\end{tabular}
\end{center}
\caption{Relative compression on RNA-seq datasets }
\label{tab: core-compression}
\end{table}

\subsection{Improvement of clustering results via outlier detection }

\begin{figure*}[ht]
\parbox{.4\linewidth}{
\begin{tabular}{|c|c|}
\hline \rowcolor{white}
\hline \rowcolor{white}
\makecell{Dataset}
&\makecell{NMI of PCA + k-means}\\ \hline
Koh   &  0.847  \\ \hline
Kumar   &  0.924 \\ \hline
Simkumar4easy  &  0.746  \\ \hline
Simkumar4hard    &  0.237  \\ \hline
Simkumar8hard   &  0.449 \\ \hline
Trapnell    &  0.286  \\ \hline
Zheng4eq    &  0.690  \\ \hline
Zheng4uneq  &  0.691   \\ \hline
Zheng8eq   &  0.554   \\ \hline
\end{tabular}
\caption{Average PCA+K-Means outcome before data removal}
\label{Kmeans-original}
}
\parbox{.6\linewidth}{
    \begin{center}
    \centering
    \includegraphics[scale=0.6]{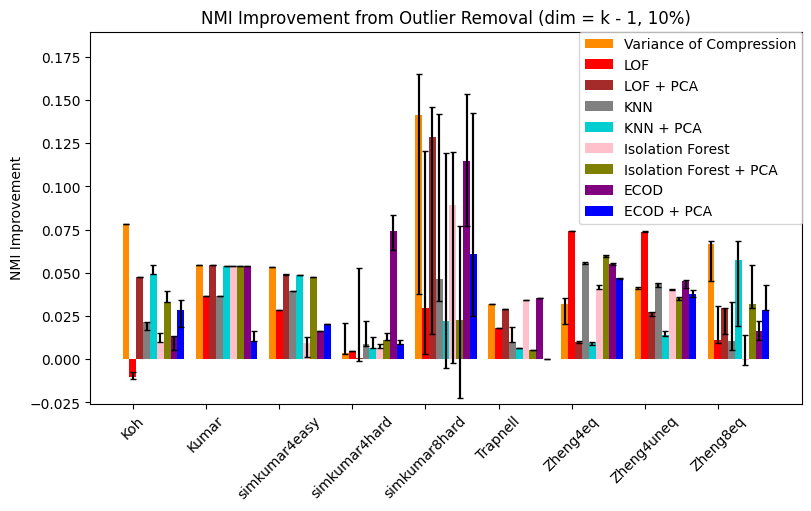}
    \caption{Improvement in the performance by removing points}
    \label{fig:NMI-10}
    \end{center}
}
\end{figure*}

Next, we study the usefulness of our outlier detection method in these datasets. Unlike our simulations, there is no ground-truth labeling for outliers. Rather, we assume that in each community (as defined by the labels provided with the dataset), some points may behave more like an outlier, in that they may be a mixture of different signals. These can also be points that have uncharacteristically high noise compared to the rest of the data points. In such a case, these data points may muddle the community structure in the datasets, and thus, removing them may improve the community structure of the datasets. We capture this improvement by observing the change in the accuracy of clustering algorithms when some outlier-like points are removed from the dataset. For our experiments, we choose PCA+K-Means as our clustering algorithm, as it is known to be effective in single-cell datasets~\cite{Kiselev-SC-consensus,kiselev2019challenges}. 

\paragraph{Experimental setup}
For each of the datasets, we do the following. Let $k$ be the number of communities. We first apply a $(k-1)$-dimensional PCA and then run a standard implementation of K-Means with $k$ centers on the post-PCA data, and record the normalized mutual information (NMI) value of the outcome, which is a popular clustering accuracy metric. This gives us our baseline. Then for each dataset, we apply $9$ outlier detection methods. The algorithms are our variance-of-compression-ratio method, 
and the original and PCA added versions of 
LOF, KNN-dist, Isolation forest, and ECOD.
We have two settings. In the first case, we remove $5\%$ of the points according to the outlier score, and in the next one, we remove $10\%$ of the points. Once the points are removed, we run PCA+K-Means on the rest of the dataset and obtain the new NMI values. 
\paragraph{Results}
First, we record the PCA+K-Means accuracies in the original datasets in Figure~\ref{Kmeans-original}.
Next, we record the NMI improvement in PCA+K-Means due to removing $10\%$ of the points in Figure~\ref{fig:NMI-10}. 
As observed from the plot, our method gives the \emph{highest NMI improvement in 5 out of 9 datasets} and is \emph{near-optimal in one more}. Of the three other datasets, ECOD has the best performance in simkumar4hard and
LOF has the best performance in the Zheng4eq and Zheng4uneq datasets. We also add the results for $5\%$ removal in the Appendix, in Section~\ref{sec: appendix: NMI improvement}. We also add the improvements in the purity index for the 5\% and 10\% point removal cases, which is another popular measure of clustering accuracy. As aggregate information, we calculate the performance rank of the methods on all the datasets in each of the settings. We record the results in Table~\ref{tab: average-ranks}. As can be observed, we obtain the the best rank in 4 out of 6 settings, and are very close to optimal in one more.

\begin{table}[ht]
\begin{center}
\begin{tabular}{|l|llllll|}
\hline
\multicolumn{1}{|c|}{}          & \multicolumn{6}{c|}{Average Rank}                                                                                                                                                                                                                                                                                                                                                                                                                                                                                                                                                            \\ \hline
\multicolumn{1}{|c|}{Algorithm} & \multicolumn{1}{l|}{\begin{tabular}[c]{@{}l@{}}NMI, \\ dim = k - 1, \\ 5\% removal\end{tabular}} & \multicolumn{1}{l|}{\begin{tabular}[c]{@{}l@{}}NMI, \\ dim = k - 1, \\ 10\% Removal\end{tabular}} & \multicolumn{1}{l|}{\begin{tabular}[c]{@{}l@{}}Purity,\\ dim = k - 1, \\ 5\% Removal\end{tabular}} & \multicolumn{1}{l|}{\begin{tabular}[c]{@{}l@{}}Purity,\\ dim = k - 1, \\ 10\% Removal\end{tabular}} & \multicolumn{1}{l|}{\begin{tabular}[c]{@{}l@{}}NMI,\\ dim = 2k, \\ 10\% Removal\end{tabular}} & \begin{tabular}[c]{@{}l@{}}Purity,\\ dim = 2k, \\ 10\% Removal\end{tabular} \\ \hline
Var. of Compression             & \multicolumn{1}{l|}{\textbf{2.111}}                                                                       & \multicolumn{1}{l|}{\textbf{2.778}}                                                                        & \multicolumn{1}{l|}{3.222}                                                                         & \multicolumn{1}{l|}{\textbf{2.111}}                                                                         & \multicolumn{1}{l|}{\textbf{2.667}}                                                                    & 3.111                                                                       \\ \hline
LOF                             & \multicolumn{1}{l|}{3.889}                                                                       & \multicolumn{1}{l|}{4.667}                                                                        & \multicolumn{1}{l|}{5.333}                                                                         & \multicolumn{1}{l|}{6.0}                                                                           & \multicolumn{1}{l|}{5.444}                                                                    & 6.333                                                                       \\ \hline
LOF + PCA                       & \multicolumn{1}{l|}{3.444}                                                                       & \multicolumn{1}{l|}{4.222}                                                                        & \multicolumn{1}{l|}{4.444}                                                                         & \multicolumn{1}{l|}{4.556}                                                                         & \multicolumn{1}{l|}{4.444}                                                                    & 5.111                                                                       \\ \hline
KNN                             & \multicolumn{1}{l|}{4.667}                                                                       & \multicolumn{1}{l|}{4.556}                                                                        & \multicolumn{1}{l|}{\textbf{3.111}}                                                                         & \multicolumn{1}{l|}{3.667}                                                                         & \multicolumn{1}{l|}{4.333}                                                                    & 3.667                                                                       \\ \hline
KNN + PCA                       & \multicolumn{1}{l|}{4.778}                                                                       & \multicolumn{1}{l|}{4.667}                                                                        & \multicolumn{1}{l|}{3.556}                                                                         & \multicolumn{1}{l|}{3.889}                                                                         & \multicolumn{1}{l|}{4.778}                                                                    & 4.333                                                                       \\ \hline
Isolation Forest                & \multicolumn{1}{l|}{4.778}                                                                       & \multicolumn{1}{l|}{5.111}                                                                        & \multicolumn{1}{l|}{4.778}                                                                         & \multicolumn{1}{l|}{5.111}                                                                         & \multicolumn{1}{l|}{5.667}                                                                    & 4.778                                                                       \\ \hline
Isolation Forest + PCA          & \multicolumn{1}{l|}{4.111}                                                                       & \multicolumn{1}{l|}{4.111}                                                                        & \multicolumn{1}{l|}{\textbf{3.111}}                                                                         & \multicolumn{1}{l|}{3.222}                                                                         & \multicolumn{1}{l|}{4.111}                                                                    & \textbf{2.667 }                                                                      \\ \hline
ECOD                            & \multicolumn{1}{l|}{4.778}                                                                       & \multicolumn{1}{l|}{3.556}                                                                        & \multicolumn{1}{l|}{3.222}                                                                         & \multicolumn{1}{l|}{3.556}                                                                         & \multicolumn{1}{l|}{3.667}                                                                    & 3.222                                                                       \\ \hline
ECOD + PCA                      & \multicolumn{1}{l|}{6.333}                                                                       & \multicolumn{1}{l|}{5.111}                                                                        & \multicolumn{1}{l|}{4.222}                                                                         & \multicolumn{1}{l|}{4.667}                                                                         & \multicolumn{1}{l|}{4.222}                                                                    & 4.0                                                                         \\ \hline
\end{tabular}
\caption{Average rank of improvement across all algorithms and experimental settings}
\label{tab: average-ranks}
\end{center}
\end{table}

Furthermore, we observe that the other methods have more varying behavior, whereas our method is arguably the most consistent.

\vspace{-0.5em}

\paragraph{Robustness to choice of dimension}
Finally, we note that the compression ratio is not overly sensitive to the choice of PCA dimension, and if we use more dimensions than the number of communities, we still get favorable results. For theoretical support, we show in Section~\ref{sec: appendix: pc-choice} of the appendix that the compression ratios of most points change only mildly when $k'>k$. In terms of experiments, we verify it as follows. For $k'=2k$, we calculate the intra-community and inter-community compression ratios in Appendix~ and find them to be consistent with Table~\ref{tab: core-compression}. Furthermore, we also run our clustering accuracy-based improvements by always choosing the dimension of PCA to be $2k$. They can be found in Figures~\ref{fig:NMI-2k-10} and \ref{fig:purity-2k-10} in the Appendix. \emph{We observe that our results, in that case, are comparatively even better} than the case of dimension=$k-1$, which further underlies the power of compression ratio as a metric.

\paragraph{Limitations}
Finally, we note a few limitations with our outlier removal algorithm. First, the algorithm is dependent on selecting a reasonable removal percentage. While we observed greater NMI improvement with greater removal rates, it is important to understand what is a suitable choice for different datasets. 
Another concern is that our outlier detection tool may not be optimal for handling highly unbalanced communities, as a very small community will show a lower variance of compression ratio. These remain interesting research directions. We note more future directions in the Appendix~\ref{sec: future}.


\bibliography{reference2024}
\bibliographystyle{alpha}

\onecolumn
\appendix

\section{Organization of the appendix}
\label{org: appendix}

In Sectiion~\ref{sec: proof} we obtain our first, generic proofs for compression ratio. Next, in Section~\ref{sec: spatially-unqiue} we interpret our results through the lens of spatially unique centers, and also prove our variance of compression result on outlier detection in this setting. Next in Section~\ref{sec:morePC} we extend the results of Section~\ref{sec: proof} when number of components is more than $k+1$. 

Section~\ref{sec:appendix:experiments} contains continuation of experimental results from the main paper. We conclude with some future directions in Section~\ref{sec: future}.

\section{Primary theorem and proof}
\label{sec: proof}

In this section, we describe our primary compression ratio related result in the random vector mixture model.  We first describe our result when the projection dimension is $k-1$. We first define some notations and useful results that we will use.

\subsection{Preliminaries}

We first define the SVD projection operator for a matrix $X$. Let the $k'$-dimensional SVD projection operator for a matrix $X$ be $P^{k'}_X$. 

Next, for the dataset matrix $X$, we denote by $Y$ its centered version. Then we have 
$\Pi^{k'}_X= P^{k'}_Y$.

Then the compression ratio of the data pair $(i,i')$, defined as $\frac{\|\bxi-\bxii\|}{\|\Pi^{k'}(\bxi-\bxii)\|}$ is in fact $\frac{\|\byi-\byii\|}{\|P^{k'}_Y(\byi-\byii)\|}$. Then we have the following bound on the compression ratios in the random vector mixture model. 

\begin{theorem}[Main result]
\label{thm: main}
Let $X$ be a $d \times n$ dataset setup in the random vector mixture model with $k$ underlying communities, so that 
all centers $\bf c_j$ and all column vectors $\bx_i \in X$ are in $[-\alpha,\alpha]^d$. Let $Y$ be the corresponding centered dataset. Considering the following notations,
\begin{enumerate}
    \item $\delta_{k'}(M):=s_{k'}(M)-s_{k'+1}(M)$ for any $M$,
    \item $\sigma^2$ be the maximum variance of the random variables,
    \item $\mathcal{N}:=C_0\sigma\sqrt{d+n}$ for some constant $C_0$
\end{enumerate}
If  $\sigma^2 \ge C_1 \frac{\log n}{n}$ for some constant $C_1$ then with probability $1-\mO(1/n)$ we have that
the $(k-1)$-PC compression ratio, $\Delta_{X,k-1}(i,i')$ of \emph{all}  \emph{intra-cluster} pairs $(i,i')$ is lower bounded as
\begin{align}
\label{eq:11}
&\Delta_{X,k-1}(i,i') \ge \nonumber
\\
&
\frac{\sqrt{2d\sigma_j^2- 12\alpha \sqrt{d}\log n}}
    { 2\sqrt{2}\left(
     \sigma\sqrt{k} +C_1\cdot \alpha \cdot \sqrt{ \log n} +
     \frac{2\mathcal{N}\cdot \sqrt{\sigma_j^2 d +12\sqrt{d}\log n}}{\delta_{k-1}(Y)} \right)}
\end{align}

Similarly, the compression ratio of all inter-cluster pairs (i,i') is upper bounded by
\begin{align}
\label{eq:12}
&\Delta_{X,k-1}(i,i') \le  \nonumber
\\
&
\frac{\sqrt{d(\sigma_{j}^2+\sigma_{j'}^2)+
\|c_j-c_{j'}\|^2 + 12\alpha \sqrt{d}\log n}}{\sqrt{2} \left( 
\|c_j-c_{j'} \|- 
2
\left( 
\sigma\sqrt{k}+ C_1\cdot \alpha \cdot \sqrt{\log n}
+
\frac{2\mathcal{N}
\cdot
\sqrt{
\|c_j-c_{j'}\|^2 
+2d\sigma^2 +12\sqrt{d}\log n}
}{\delta_{k-1}(Y)} 
\right) 
\right)}
\end{align}
with probability $1-\mO(1/n)$. 
\end{theorem}

Here we make the following remark about the range of the datapoints.

\subsection{Definitions and notations}

We start with the definition of the norm operator $\| \cdot \|$, which we use in the following two contexts.

\begin{enumerate}

\item If $\bu$ is a $d$ dimensional vector,  then $\|\bu\|$ denotes the $\ell_2$ norm of $\bu$,  which is $\sqrt{\sum_{i=1}^d (\bu_i)^2}$.  Then $\|\bu-\bv\|$ is the $\ell_2$ distance between the two vectors.

\item If $M$ is a $d \times n$ matrix $M$,  $\|M\|$ denotes the spectral norm of the matrix.  That is,  
\[
\|M\|= \max_{\bu ,\|\bu\| \le 1} \{ \|M \bu\|\}
\]

\end{enumerate}

We follow this by defining some more structures related to random matrices. 

\begin{enumerate}

\item For any matrix $M$,  we denote $\bar{M}:=\mathbb{E}[M]$.  Then by definition,  $\barX$ is a $d \times n$ matrix such that if $i \in V_j$,  the $i$-th column of $\barX$ is $c_j$ (as $\mathcal{D}^{(j)}$ is a coordinate wise zero mean distribution),  the ground truth center of $V_j$.  Thus,  we denote by $\barX$ as the ground-truth or expectation matrix of $X$.  Similarly $\barY$ is the center matrix of $Y$ (recall that $Y$ is the column centered matrix of $X$).
Furthermore let $\bar{M}_i$ be the $i$-th column of $\bar{M}$.
Then $\|\bar{y}_i-\bar{y}_{i'}\|=\|\bar{x}_i-\bar{x}_{i'}\|$ for any $(i,i')$ pair. Thus we can call $\barY$ as the ground truth matrix of $Y$.

\item Corresponding to any matrix $M$, we denote $E_M:=M-\mathbb{E}[M]$.

\item {\bf Choice of subscripts:} From hereon we use the subscript $i$ to denote the columns of $X$ and $Y$.  We use the subscript $j$ for cluster identities and $\ell$ for rows of the matrices or the column vectors.

\end{enumerate}

With this a background,  we give a short sketch of the proof.

\paragraph{Looking at the numerator and denominator separately:}Proving the relative compressibility result requires the following results in turn.  Recall that the compression ratio is the ratio between pre PCA and post PCA distances between pair of datapoints and we want to lower bound ``intra-community'' compression ratio and upper bound ``inter-community'' compression ratio.   This means we need the following bounds to prove Theorem~\ref{thm: main}.

\begin{enumerate}

\item For the intra-community pairs of vectors,  prove a lower bound on the pre PCA distance and upper bound on the post PCA distances.

\item For the inter-community pairs of vectors,  prove an upper bound on the pre PCA distance and a lower bound on the post PCA distance. 

\end{enumerate}

We first obtain the pre PCA distance bounds,  which are straightforward to obtain using the fact that the randomness in the vectors of $X$ are coordinate wise independent,  and that $\|\bm{y_i}-\bm{y_{i'}}\|=\|\bm{x_i}-\bm{x_{i'}}\|$ for any $(i,i')$ pair.

\subsection{Pre PCA distances}
\begin{lemma}
\label{lemma:pre-pca}
Let $\byi$ and $\byii$ be two vectors (datapoints) of $Y$ with ground truth communities $V_j$ and $V_{j'}$ respectively. 
If $j=j'$ then we have $\|\byi-\byii\| \ge
\sqrt{2d\sigma_j^2- 12\alpha\sqrt{d}\log n}$ with probability $1-\mO(n^{-3})$, otherwise if $j \neq j'$ 
then we have $\|\byi-\byii\| \le \sqrt{d(\sigma_{j}^2+\sigma_{j'}^2)+
\|c_j-c_{j'}\|^2 + 12\alpha\sqrt{d}\log n}$ with probability $1-\mO(n^{-3})$. 
\end{lemma}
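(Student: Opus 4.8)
The plan is to reduce both inequalities to a single application of Hoeffding's inequality for a sum of $d$ independent, uniformly bounded random variables. First I would unpack the model: if $\bui \in V_j$ and $\buii \in V_{j'}$, then $\bui = c_j + \be_i$ and $\buii = c_{j'} + \be_{i'}$ with $\be_i \sim \mathcal{D}^{(j)}$ and $\be_{i'}\sim\mathcal{D}^{(j')}$ sampled independently. Writing the squared distance coordinate-wise,
\[
\|\bui-\buii\|^2 \;=\; \sum_{\ell=1}^d X_\ell,\qquad X_\ell := (\bu_{i,\ell}-\bu_{i',\ell})^2 = \bigl((c_{j,\ell}-c_{j',\ell})+(\be_{i,\ell}-\be_{i',\ell})\bigr)^2 .
\]
Two structural facts drive the argument. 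Because every entry of $\hA$ lies in $[0,1]$, each $\bu_{i,\ell},\bu_{i',\ell}\in[0,1]$, so $X_\ell\in[0,1]$ regardless of the noise distributions; and because $\mathcal{D}^{(j)},\mathcal{D}^{(j')}$ are coordinate-wise independent and the two noise vectors are independent of each other, $X_1,\dots,X_d$ are mutually independent. Using $\E[\be_{i,\ell}]=\E[\be_{i',\ell}]=0$, expanding the square kills the cross term in expectation and leaves $\E[X_\ell] = (c_{j,\ell}-c_{j',\ell})^2 + \mathrm{Var}(\mathcal{D}^{(j)}_\ell)+\mathrm{Var}(\mathcal{D}^{(j')}_\ell)$, hence $\E[\|\bui-\buii\|^2] = \|c_j-c_{j'}\|^2 + \sigma_j^2 + \sigma_{j'}^2$, which collapses to $2\sigma_j^2$ when $j=j'$ since the center terms cancel (the factor $d$ in the displayed bounds of Theorem~\ref{thm: main} is merely the per-coordinate normalization of $\sigma_j^2$).

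Then I would apply Hoeffding's inequality to $S=\sum_{\ell=1}^d X_\ell$: since each $X_\ell$ lies in an interval of length $1$, $\Pr[\,|S-\E S|\ge t\,]\le 2\exp(-2t^2/d)$, and taking $t = 12\sqrt{d}\log n$ makes the failure probability $\mO(n^{-3})$ with room to spare. For $j=j'$ I keep the lower tail, giving $\|\bui-\buii\|^2 \ge 2\sigma_j^2 - 12\sqrt d\log n$; for $j\ne j'$ I keep the upper tail, giving $\|\bui-\buii\|^2 \le \|c_j-c_{j'}\|^2+\sigma_j^2+\sigma_{j'}^2+12\sqrt d\log n$. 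Taking square roots on both sides yields exactly the two claimed bounds.

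There is no genuine obstacle in this lemma; it is the easy half of Theorem~\ref{thm: main}, with all the real difficulty deferred to the post-PCA (denominator) estimates. The only points that need a moment's care are: (i) observing that $X_\ell=(\bu_{i,\ell}-\bu_{i',\ell})^2\in[0,1]$ \emph{uniformly}, which is precisely what lets us dispense with any distributional assumption on the noise beyond the $[0,1]$ box; and (ii) tracking the constant so that the Hoeffding slack is at most $12\sqrt d\log n$ while the per-pair failure probability stays $\mO(n^{-3})$, so that the union bound over the $\mO(n^2)$ intra- and inter-cluster pairs in the proof of Theorem~\ref{thm: main} still leaves probability $1-\mO(n^{-1})$.
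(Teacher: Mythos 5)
Your proposal is correct and follows essentially the same route as the paper's proof: decompose $\|\bui-\buii\|^2$ into a sum of $d$ independent coordinate-wise terms, compute the expectation using the zero-mean noise assumption (which kills the cross terms), and apply Hoeffding's inequality to each tail with slack $12\sqrt{d}\log n$. If anything, you are more explicit than the paper about the one hypothesis Hoeffding actually needs here --- the uniform bound $X_\ell\in[0,1]$ coming from the $[0,1]$ box constraint on the entries of $\hA$ --- and about the paper's implicit per-coordinate normalization of $\sigma_j^2$ that reconciles $\E[\sum_\ell \bw_\ell^2]$ with the stated $2d\sigma_j^2$.
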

\begin{proof}

We know that for any $(i,i')$ pair $\|\byi-\byii\|=\|\bxi-\bxii\|$. Using this fact we prove the bounds on the datapoints of $X$. 

First we consider the case when $\bxi$ and $\bxii$ belong to the same community. 
Then $\|\bxi-\bxii\|^2= \sum_{\ell=1}^d ((\bxi)_{\ell}-(\bxii)_{\ell})^2$.
Here for each $\ell$ we define $\bw_{\ell}=(\bxi)_{\ell}-(\bxii)_{\ell}=
(c_j)_{\ell}+ (\mat{e_i})_{\ell} - (c_{j})_{\ell}- (\mat{e_{i'}})_{\ell}
=(\mat{e_i})_{\ell}-(\mat{e_{i'}})_{\ell}$.
Then $\E \left[ \bw_{\ell}^2 \right]=
\E[((\mat{e_i})_{\ell})^2]+\E[((\mat{e_{i'}})_{\ell})^2]=Var((\mat{e_i})_{\ell})+Var((\mat{e_{i'}})_{\ell})$.

We define $\sigma^2_{l,i}=Var\left( \left( \mat{e_i} \right)_{\ell} \right)$ (to use the more familiar row major representation).  
Recall that  both $\mat{e_i}$ and $\mat{e_{i'}}$ are sampled from $\mathcal{D}^{(j)}$
and $\sigma_j^2$ is the average of variance of the coordinates of the distribution $\mathcal{D}^{(j)}$. i.e., 
$
\E \left[\sum_{\ell} \bw_{\ell}^2 \right]= \sum_{\ell=1}^d  
\sigma^2_{l,i}+\sigma^2_{l,i'}= 2d\sigma_j^2
$.
Now recall that the random variable $\bw_{\ell}$ is in the range $[-2, 2]$ for any $\ell$. Then applying Hoeffding bound on this setup we get
\[
\Pr\left[ \sum_{\ell=1}^d \bw_{\ell}^2 \le 2d\sigma_j^2- 12 \alpha \sqrt{d}\log n\right] \le n^{-3}.
\]
Thus, if $\bxi$ and $\bxii$ belong to the same community 
then with probability $1-\mO(n^{-3})$ we have 
$\|\bxi-\bxii\| \ge \sqrt{2d\sigma_j^2- 12 \alpha^2 \sqrt{d}\log n}$.\\

Similarly, if $\bxi$ and $\bxii$ belong to different communities
$V_j$ and $V_{j'}$,
we have the random variable $\bw_{\ell}=\left(\bxi \right)_{\ell}-\left(\bxii \right)_{\ell}$ with mean $\left( c_{j} \right)_{\ell}-\left(c_{j'} \right)_{\ell}$ (due to the difference in the centers ) and variance $\sigma^2_{l,i}+\sigma^2_{l,j}$,
where we define $c_{\ell,j}= \left( c_{j} \right)_{\ell}$.
Then $\E\left[\bw_{\ell}^2\right]=\sigma^2_{l,i}+\sigma^2_{l,j}+\left( c_{\ell,j}-c_{\ell,j'} \right)^2$ and 
\[
\E \left[ \sum_{\ell=1}^d \bw_{\ell}^2 \right]=\sum_{\ell=1}^d \sigma^2_{\ell,i}+\sigma^2_{\ell,j}+(c_{\ell,j}-c_{\ell,j'})^2
= d(\sigma^2_{j}+\sigma^2_{j'})+\|{\bf c_j}-{\bf c_{j'}}\|^2
\]
Applying Hoeffding bound we get
\[
\Pr\left[ \sum_{\ell=1}^d \bw_{\ell}^2 \ge d(\sigma^2_{j}+\sigma^2_{j'})+\|c_j-c_{j'}\|^2 + 12 \alpha\sqrt{d}\log n\right] \le n^{-3}.
\]
Thus if $\bxi$ and $\bxii$ belong to different communities 
then with probability $1-n^{-3}$ we have 
$\|\bxi-\bxii\| \le \sqrt{d(\sigma^2_{j}+\sigma^2_{j'})+\|c_j-c_{j'}\|^2 + 12\alpha\sqrt{d}\log n}$.
\end{proof}

Now, we move into the analysis of post-PCA distances, which is the more technical part of the proof. 

\subsection{Post PCA distance}

\paragraph{High-level idea.}
The idea behind the proof is simple. 

In our setup, $\barX=\mathbb{E}[X]$ is the ground truth matrix, such that if the $i$-th column of $X$ belongs to $V_j$, then the $i$-th column of  $\barX$ is ${\bf c_j}$. Thus, $\barX$ is rank $k$ and 
thus $\|P^{k}_{\barX}({\bf c_j}-{\bf c_{j'}})\|=\|{\bf c_j}-{\bf c_{j'}}\|$. 
This implies $\barY$ has rank $k-1$ and 
$\|P^{k-1}_{\barY}({\bf c_j}-{\bf c_{j'}})\|=\|{\bf c_j}-{\bf c_{j'}}\|$. The crux of the proof is to show that $P^{k-1}_{\barY}$ can be well approximated with $P^{k-1}_{Y}$, even when $Y$ and $\barY$ differ significantly (due to the noise). 


To achieve this result we use tools from spectral analysis of random matrices, i.e. tools that study the behavior of eigenvalue and eigenvectors of random matrices. Here we face two hurdles. 
\begin{enumerate}

\item First we note that the matrix $Y$ is rectangular and unsymmetric. The vast majority of tools in spectral analysis of random matrix theory are focused on symmetric matrices. To use this to our advantage we focus on a closely related symmetric matrix through the following symmetrization trick, which is essential to the proof.  
We define the matrix $Z$ as
$Z:=\begin{bmatrix}
0 & Y \\
Y^T &0
\end{bmatrix}$.
This is a $d+n \times d+n$ symmetric matrix. 
We show that $P^{k-1}_Y$ can be analyzed through $P^{k-1}_Z$ and then approximate the second projection operator using $P^{k-1}_{\mathbb{E}[Z]}$, borrowing tools from classical random matrix theory. This gives us preliminary post PCA distance bounds expressed using $\|Y-\barY\|$. 

\item 
Then obtaining the exact bounds of Theorem~\ref{thm: main} require bounds on the spectral norm of $Y-\barY$. There exists a rich literature on spectral norm of random symmetric matrices with independent entries, but $Y-\barY$ does not satisfy this either. This is because, since $Y$ is obtained by subtracting the column mean from each vector of $X$, the entries of $Y$, and thus $Y-\barY$ are not independent either. To this end, we first obtain the said properties for $X-\mathbb{E}[X]$ borrowing tools from \cite{vuSBM} on our symmetrization trick, and then accommodate for the effect of centering using results from \cite{Centering-PCA} to complete our proof. 

\end{enumerate}

We now describe the symmetrization trick and its implications in detail.

\subsubsection{A comparable symmetric case}

We start by recalling the symmetric matrix corresponding to $Y$, $Z=
\begin{bmatrix}
0 & Y \\
Y^T &0
\end{bmatrix}$. 
As per our notations we denote $\barZ=\mathbb{E}[Z]$ and then we have
$\bar{Z}=
\begin{bmatrix}
0 & \barY \\
\barY^T &0
\end{bmatrix}$. 
Furthermore we have $E_Z=Z-\barZ$.

Then the eigenvectors of $Z$ and singular vectors of $Y$ 
(and similarly $\barZ$ and $\barY$) are related as follows.

\begin{fact}
\label{fact:sym}
Let the left and right singular vectors of $Y$
be $\mat{\hat{l}}_{t}, 1 \le t \le d$ and $\mat{\hat{r}}_t, 1 \le t \le n$ respectively.
Then the eigenvectors of $Z$ 
are
$\frac{1}{\sqrt{2}}
\begin{bmatrix}
    \mat{\hat{l}}_{t} \\ \mat{\hat{r}}_{t}
\end{bmatrix}$ with eigenvalue $\hat{\lambda}_t=s_t$
and
$\frac{1}{\sqrt{2}}
\begin{bmatrix}
    \mat{\hat{l}}_t \\ -\mat{\hat{r}}_t
\end{bmatrix}$ with eigenvalue $\hat{\lambda}_{t}=-s_t$
where $1 \le t \le \min(d,n)$,
The same follows for $\barY$ and $\barZ$.
\end{fact}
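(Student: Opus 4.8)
The plan is to verify Fact~\ref{fact:sym} by a direct computation from the defining relations of the singular value decomposition. Write the (thin) SVD of $\hA$ as $\hA=\sum_{t=1}^{\min(d,n)} s_t\,\mat{\hat{l}}_t\,\mat{\hat{r}}_t^T$, so that $\hA\,\mat{\hat{r}}_t=s_t\,\mat{\hat{l}}_t$ and $\hA^T\,\mat{\hat{l}}_t=s_t\,\mat{\hat{r}}_t$ for every $t\le\min(d,n)$, with $\{\mat{\hat{l}}_t\}$ orthonormal in $\mathbb{R}^d$ and $\{\mat{\hat{r}}_t\}$ orthonormal in $\mathbb{R}^n$. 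First I would apply $\hB$ to the block vector $\frac{1}{\sqrt{2}}\begin{bmatrix}\mat{\hat{l}}_t \\ \mat{\hat{r}}_t\end{bmatrix}$: by the block structure of $\hB$ the top block becomes $\frac{1}{\sqrt{2}}\hA\,\mat{\hat{r}}_t=\frac{s_t}{\sqrt{2}}\mat{\hat{l}}_t$ and the bottom block becomes $\frac{1}{\sqrt{2}}\hA^T\,\mat{\hat{l}}_t=\frac{s_t}{\sqrt{2}}\mat{\hat{r}}_t$, so the vector is an eigenvector of $\hB$ with eigenvalue $s_t$. The same computation with a sign flip on the $\mat{\hat{r}}_t$ block produces top block $-\frac{s_t}{\sqrt{2}}\mat{\hat{l}}_t$ and bottom block $\frac{s_t}{\sqrt{2}}\mat{\hat{r}}_t$, which equals $-s_t$ times $\frac{1}{\sqrt{2}}\begin{bmatrix}\mat{\hat{l}}_t \\ -\mat{\hat{r}}_t\end{bmatrix}$, giving eigenvalue $-s_t$.

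Next I would check that these $2\min(d,n)$ vectors form an orthonormal system. Each has unit norm since $\frac{1}{2}\big(\|\mat{\hat{l}}_t\|^2+\|\mat{\hat{r}}_t\|^2\big)=\frac{1}{2}(1+1)=1$. For two ``$+$'' vectors the inner product is $\frac{1}{2}\big(\langle\mat{\hat{l}}_t,\mat{\hat{l}}_{t'}\rangle+\langle\mat{\hat{r}}_t,\mat{\hat{r}}_{t'}\rangle\big)$, which is $1$ if $t=t'$ and $0$ otherwise; the same holds for two ``$-$'' vectors; and for a ``$+$'' vector against a ``$-$'' vector the two inner products enter with opposite signs and cancel, so all cross terms vanish. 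Thus $\hB$ has an orthonormal set of $2\min(d,n)$ eigenvectors with eigenvalues $\pm s_1,\dots,\pm s_{\min(d,n)}$; since $\hB$ is $(d+n)\times(d+n)$, the remaining $|d-n|$ eigenvectors must have eigenvalue $0$ (they are supported on the left null space of $\hA^T$ or the right null space of $\hA$). This is consistent with $\hB^2=\begin{bmatrix}\hA\hA^T & 0 \\ 0 & \hA^T\hA\end{bmatrix}$, whose nonzero eigenvalues are exactly the $s_t^2$ with multiplicity doubled.

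There is essentially no obstacle here: the statement is the standard symmetric-dilation (Jordan--Wielandt) identity, and the proof is a short direct verification. The only parts needing a line of care are the orthonormality bookkeeping — unit norm together with the sign cancellation between the ``$+$'' and ``$-$'' families — and, if one wants a complete eigenbasis, the dimension count that places the leftover $|d-n|$ directions in the kernel. The analogous claim for $A$ and $B$ follows verbatim by replacing $\hA$ with $A$. I would close by recording the consequence that is actually used downstream: ordering eigenvalues of $\hB$ in decreasing order, the top $k$ eigenvectors are exactly $\frac{1}{\sqrt{2}}\begin{bmatrix}\mat{\hat{l}}_t \\ \mat{\hat{r}}_t\end{bmatrix}$ for $t=1,\dots,k$, so that the left $k$ singular vectors of $\hA$ (hence the operator $\hAA$) can be read off from the leading eigenspace of the symmetric matrix $\hB$, to which perturbation bounds such as Davis--Kahan apply.
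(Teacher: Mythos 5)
Your verification is correct and is the standard Jordan--Wielandt dilation argument; the paper itself states this as a Fact with no proof, so there is nothing to diverge from. Your added bookkeeping (orthonormality of the $\pm$ families, the $|d-n|$ kernel directions, and the identification of the top-$k$ eigenspace of $\hB$ with the left singular subspace of $\hA$, which implicitly assumes $s_k>0$) is exactly what the paper relies on downstream in Lemmas~\ref{lem:post-intra-bound} and~\ref{lem:post-inter-bound}.
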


Here we also formally define $P^k_M$ for symmetric matrices $M$
as in this case we work with eigenvectors corresponding to top eigenvalues, instead of top singular values (as in case of $Y$), for clarity. 
\begin{remark}
\label{remark:eigen}
For any matrix $M$, we have defined $P^{k'}_X$ as the matrix whose rows are the top $k'$ singular vectors of $M$.

However, when we discuss a symmetric matrix $M'$, $P^{k'}_{M'}$ is a matrix whose rows are the eigenvectors corresponding to the top $k'$
\emph{eigenvalues} of $M'$. 
\end{remark}

This in turn gives us the following results connecting $P^{k'}_Y$ and $P^{k'}_Z$.

\begin{fact}
\label{fact: projection-sym}
Let $0^n$ be the $n$ dimensional zero vector.  
Furthermore let 
$\bv|0^n:=\begin{bmatrix}
\bv \\ 0^n
\end{bmatrix}$ for any vector $\bv$.
Then for any $d$-dimensional vector $\bv$ we have
$\|P^{k'}_{Y} \bv \|=\sqrt{2}
\left \| P^k_{Z}
(\bv|0^n) \right \|$
\end{fact}

This result allows us to work with the symmetric matrices $Z$ and $\barZ$ instead of $Y$. Now we obtain the results needed to approximate $P^{k'}_Z$ with $P^{k'}_{\barZ}$.

\paragraph{Difference in spectral projections of $\barZ$ and $Z$:}
Here we use the Davis-Kahan Theorem~\cite{DavisKahan70} along with a result by Cape et. al.~\cite{Cape2019} to obtain an upper bound between the norm of difference of the leading eigenspaces of $Z$ and $\barZ$ under some appropriate orthonormal rotation that we shall use to obtain our results. The main reason behind using these tools is that
the SVD projection matrix due to $\barZ$ is well behaved.

\begin{theorem}[Davis-Kahan Theorem:~\cite{DavisKahan70}]
\label{th:daviskahan}
Let $D$ and $\hat{D}$ be $p \times p$ symmetric matrices,
with eigenvalues $\lambda_1, \ldots ,\lambda_p$
and $\hat{\lambda}_1, \ldots ,\hat{\lambda}_p$
respectively.
Define $E_D=\hat{D}-D$ and $\delta_{k'}=\lambda_{k'}-\lambda_{k'+1},
1 \le k < p$.
Let $U=
\begin{bmatrix}
\mat{u_1} \ldots  ,\mat{u_{k'}}
\end{bmatrix}
$
and $\hat{U}=
\begin{bmatrix}
\mat{\hat{u}_1}  \ldots ,\mat{\hat{u}_{k'}}
\end{bmatrix}
$
are matrices in $\mathbb{R}^{p \times k'}$ 
where $\mat{u_i}$ and $\mat{\hat{u_i}}$ are 
eigenvectors of $D$ and $\hat{D}$ w.r.t to the $i$-th 
top eigenvalue.
Then 
\begin{equation}
\left \|\sin \Theta \left(U,\hat{U}\right)  \right \|  \le
\frac{2\|E_D \|}{\delta_{k'}}
\end{equation}

\end{theorem}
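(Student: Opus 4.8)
The plan is to bound $\|\sin\Theta(U,\hat U)\|$ through the standard identification $\|\sin\Theta(U,\hat U)\| = \|U_\perp^\top\hat U\|$, where $U_\perp \in \mathbb{R}^{p\times(p-k)}$ has orthonormal columns spanning the orthogonal complement of the column space of $U$. Completing $U$ to an orthonormal eigenbasis of $D$ gives $DU = U\Lambda_1$ and $DU_\perp = U_\perp\Lambda_2$ with $\Lambda_1 = \mathrm{diag}(\lambda_1,\dots,\lambda_k)$ and $\Lambda_2 = \mathrm{diag}(\lambda_{k+1},\dots,\lambda_p)$, and likewise $\hat D\hat U = \hat U\hat\Lambda_1$ with $\hat\Lambda_1 = \mathrm{diag}(\hat\lambda_1,\dots,\hat\lambda_k)$. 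Since the singular values of $U^\top\hat U$ are the cosines and those of $U_\perp^\top\hat U$ are the sines of the principal angles between the two $k$-dimensional subspaces, it suffices to upper bound $\|Z\|$ for $Z := U_\perp^\top\hat U$, as $\|\sin\Theta(U,\hat U)\| = \|Z\|$.

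First I would derive a Sylvester equation for $Z$. Left-multiplying $\hat D\hat U = \hat U\hat\Lambda_1$ by $U_\perp^\top$ and using $\hat D = D + E_D$ together with $U_\perp^\top D = \Lambda_2 U_\perp^\top$ (symmetry of $D$) yields
\[
\Lambda_2 Z - Z\hat\Lambda_1 \;=\; -\,U_\perp^\top E_D\,\hat U \;=:\; S, \qquad \|S\| \le \|E_D\|,
\]
since $U_\perp$ and $\hat U$ have orthonormal columns. The spectrum of $\hat\Lambda_1$ lies in $[\hat\lambda_k,\infty)$ and that of $\Lambda_2$ in $(-\infty,\lambda_{k+1}]$; by Weyl's inequality $\hat\lambda_k \ge \lambda_k - \|E_D\|$, so the two spectra are separated by at least $\eta := \hat\lambda_k - \lambda_{k+1} \ge \delta_k - \|E_D\|$. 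This is the only point at which the eigenvalue hypothesis enters.

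The substantive step is inverting the Sylvester operator in spectral norm. Put $t = \tfrac12(\hat\lambda_k + \lambda_{k+1})$, $M := \hat\Lambda_1 - tI$ and $N := tI - \Lambda_2$, so that $M, N \succeq (\eta/2)I$ and the equation reads $NZ + ZM = -S$. Differentiating $f(s) := e^{-Ns}Ze^{-Ms}$ gives $f'(s) = e^{-Ns}Se^{-Ms}$, and $f(s)\to 0$ as $s\to\infty$ because $M,N$ are positive definite; hence $Z = -\int_0^\infty e^{-Ns}Se^{-Ms}\,ds$, and using $\|e^{-Ns}\| = e^{-s\,\lambda_{\min}(N)} \le e^{-s\eta/2}$ (and similarly for $M$),
\[
\|Z\| \;\le\; \|S\|\int_0^\infty e^{-\eta s/2}e^{-\eta s/2}\,ds \;=\; \frac{\|S\|}{\eta} \;\le\; \frac{\|E_D\|}{\delta_k - \|E_D\|}.
\]
I then finish with a case split: if $\|E_D\| \le \delta_k/2$ the display gives $\|Z\| \le \|E_D\|/(\delta_k/2) = 2\|E_D\|/\delta_k$, while if $\|E_D\| > \delta_k/2$ then $2\|E_D\|/\delta_k > 1 \ge \|\sin\Theta(U,\hat U)\|$ holds trivially since every principal-angle sine is at most $1$; in either case $\|\sin\Theta(U,\hat U)\| = \|Z\| \le 2\|E_D\|/\delta_k$.

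The main obstacle I anticipate is precisely the spectral-norm Sylvester bound: a purely entrywise argument, or the trace-inner-product identity $\langle NZ+ZM,Z\rangle_F \ge \eta\|Z\|_F^2$, only delivers the Frobenius-norm version $\|Z\|_F \le \|S\|_F/\eta$, which is weaker by a dimension factor. The resolvent/integral representation above, together with the elementary exponential decay estimate for Hermitian positive definite matrices, is what pushes the bound through in the operator norm; the remaining pieces (Weyl's inequality, the cosine/sine characterization of principal angles, and the algebra yielding the Sylvester equation) are routine.
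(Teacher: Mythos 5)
Your proof is correct, but note that the paper does not prove this statement at all: it is imported by citation from Davis--Kahan (1970) and used as a black box, so there is no internal argument to compare against. What you have written is a complete, self-contained derivation of the ``$2\|E_D\|/\delta_k$'' variant (the form popularized by Yu, Wang and Samworth), and every step checks out: the identification $\|\sin\Theta(U,\hat U)\|=\|U_\perp^\top \hat U\|$, the Sylvester equation $\Lambda_2 Z - Z\hat\Lambda_1 = -U_\perp^\top E_D\hat U$ with $\|S\|\le\|E_D\|$, the Weyl step giving separation $\eta\ge\delta_k-\|E_D\|$, the operator-norm inversion of the Sylvester map via $Z=-\int_0^\infty e^{-Ns}Se^{-Ms}\,ds$ (which, as you say, is exactly what avoids the Frobenius-norm loss), and the final case split on $\|E_D\|\lessgtr\delta_k/2$ that both restores the clean constant $2/\delta_k$ and disposes of the regime where $\eta$ could be nonpositive. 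The one presentational caveat is that the theorem as stated in the paper leaves the ordering of the $\lambda_i$ implicit (``$i$-th top eigenvalue''), and your argument correctly assumes the descending convention; you might state that explicitly, and also note that when eigenvalues inside the top-$k$ block are degenerate the bound is a statement about the invariant subspaces rather than individual eigenvectors, which is all the paper needs since it only ever uses $P^k_B$ as a projection.
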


\begin{theorem}[Perturbation under Procrustes Transformation:~\cite{Cape2019}]
\label{th:procrustes}
Let $U$ and $\hat{U}$ be two $p \times k'$ matrices such that the columns of $U$ (and similarly $\hat{U}$) comprise of $k'$ many unit vectors that are mutually orthogonal. 

Then there exists a $k' \times k'$ orthonormal matrix $W_U$ such that
\[
\left  \|\sin \Theta \left(U,\hat{U}\right)  \right \| \le
\|U-\hat{U}W_U\| \le \sqrt{2} \left  \|\sin \Theta \left(U,\hat{U}\right)  \right \|
\]
\end{theorem}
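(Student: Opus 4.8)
The plan is to reduce the statement to the principal angles between the column spaces of $U$ and $\hat{U}$ and then exhibit the aligning rotation explicitly. Recall that the principal angles $\theta_1 \ge \dots \ge \theta_k \ge 0$ between $\mathrm{span}(U)$ and $\mathrm{span}(\hat{U})$ are defined so that $\cos\theta_1, \dots, \cos\theta_k$ are the singular values of the $k \times k$ matrix $U^T \hat{U}$; with the standard convention $\sin\Theta(U,\hat{U}) = \mathrm{diag}(\sin\theta_1, \dots, \sin\theta_k)$, so that $\|\sin\Theta(U,\hat{U})\| = \max_i \sin\theta_i$ (and the Frobenius norm is $(\sum_i \sin^2\theta_i)^{1/2}$). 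Since $U$ and $\hat{U}$ have orthonormal columns, $\|U^T\hat{U}\| \le \|U\|\|\hat{U}\| = 1$, hence every $\cos\theta_i \in [0,1]$, i.e. $\theta_i \in [0,\pi/2]$; this range is exactly what pins the constant at $\sqrt{2}$.

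First I would fix the rotation. Take a singular value decomposition $U^T \hat{U} = P \Sigma Q^T$ with $P, Q$ orthogonal $k \times k$ and $\Sigma = \mathrm{diag}(\cos\theta_1, \dots, \cos\theta_k)$, and set $W_U := Q P^T$, which is orthogonal. Expanding and using $U^TU = \hat{U}^T\hat{U} = I_k$ together with $U^T\hat{U}W_U = P\Sigma Q^T Q P^T = P\Sigma P^T$ (a symmetric matrix),
\[
(U - \hat{U}W_U)^T(U - \hat{U}W_U) = 2I_k - U^T\hat{U}W_U - (U^T\hat{U}W_U)^T = 2I_k - 2P\Sigma P^T = 2P(I_k - \Sigma)P^T .
\]
Thus the squared singular values of $U - \hat{U}W_U$ are precisely $2(1-\cos\theta_i)$, $i = 1,\dots,k$. (This $W_U$ is in fact the minimizer of the orthogonal Procrustes objective $\min_W \|U - \hat U W\|_F$, but for the theorem only its existence is needed.)

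It then remains to compare $\sqrt{2(1-\cos\theta_i)}$ with $\sin\theta_i$. Using the half-angle identities $2(1-\cos\theta) = 4\sin^2(\theta/2)$ and $\sin^2\theta = 4\sin^2(\theta/2)\cos^2(\theta/2)$, and $\cos^2(\theta/2) \in [1/2,1]$ for $\theta \in [0,\pi/2]$,
\[
\sin^2\theta = 4\sin^2(\theta/2)\cos^2(\theta/2) \le 4\sin^2(\theta/2) = 2(1-\cos\theta) \le 8\sin^2(\theta/2)\cos^2(\theta/2) = 2\sin^2\theta ,
\]
so $\sin\theta_i \le \sqrt{2(1-\cos\theta_i)} \le \sqrt{2}\,\sin\theta_i$ for every $i$. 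These bounds are coordinate-wise in the singular values of $U - \hat{U}W_U$ versus the diagonal entries of $\sin\Theta(U,\hat{U})$, so taking the maximum over $i$ yields the spectral-norm inequality $\|\sin\Theta(U,\hat{U})\| \le \|U - \hat{U}W_U\| \le \sqrt{2}\,\|\sin\Theta(U,\hat{U})\|$ (and summing the squares gives the Frobenius version), which is the claim. The computation is entirely routine; the only points needing care are bookkeeping ones — matching the convention used for $\sin\Theta$, the side on which $W_U$ acts, and confirming that $U^T\hat U$ has all its singular values in $[0,1]$ so that $\theta_i \in [0,\pi/2]$ and the constant is genuinely $\sqrt{2}$ (tight as $\theta_i \to \pi/2$). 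There is no substantive obstacle beyond the identity for $(U-\hat{U}W_U)^T(U-\hat{U}W_U)$ and the elementary bound $\sin^2\theta \le 2(1-\cos\theta) \le 2\sin^2\theta$.
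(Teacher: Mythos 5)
Your proposal is correct. Note that the paper itself gives no proof of this statement: it is imported verbatim from Cape, Tang and Priebe (2019) and used as a black box, so there is nothing in the paper to compare against step by step. Your argument is the standard one and fills that gap in a self-contained way: taking the SVD $U^T\hat{U}=P\Sigma Q^T$, choosing the Procrustes rotation $W_U=QP^T$, computing $(U-\hat{U}W_U)^T(U-\hat{U}W_U)=2P(I_k-\Sigma)P^T$ so that the singular values of $U-\hat{U}W_U$ are exactly $\sqrt{2(1-\cos\theta_i)}$, and then sandwiching $2(1-\cos\theta)$ between $\sin^2\theta$ and $2\sin^2\theta$ on $[0,\pi/2]$. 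All the delicate points are handled: you justify $\theta_i\in[0,\pi/2]$ via $\|U^T\hat{U}\|\le 1$, the chosen $W_U$ makes $U^T\hat{U}W_U$ symmetric (which is what makes the squared-distance computation clean), and the elementwise inequalities pass correctly to the maximum, giving the spectral-norm statement (and, as you note, to the sum of squares for the Frobenius version). The only caveat worth recording is that the constant $\sqrt{2}$ and the two-sided bound depend on this particular choice of $W_U$; for an arbitrary orthogonal alignment only the lower bound would survive, so the theorem's ``there exists'' quantifier is essential and your construction exhibits the right witness.
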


Combining them we get the following result. 
\begin{theorem}
\label{th:spectral-difference}
Given the matrices $Y$ and $\barY$ and $Z$ and $\barZ$
defined as described above, there exists an orthonormal 
matrix $W_Z$ such that
\[
\left \| (P^{k'}_{Z})^T-(P^{k'}_{\barZ})^T(W_Z)^T \right \| 
\le \frac{2\|E_Z\|}{\delta_{k'}(Y)} 
\]
This in turn implies
\[
\left \| P^{k'}_{Z}-W_ZP^{k'}_{\barZ} \right \| \le \frac{2\|E_Z\|}{\delta_{k'}(Y)}  
\]
\end{theorem}

Next, we obtain a result on the projection of a random vector on a $k'$ dimensional subspace. 

\subsubsection{Random Projection}

Now, we derive a strong bound for $\| P^k_M \be \| $ 
where $\be \in \mathbb{R}^{d}$ is a coordinate wise independent random vector with mean $0^d$ and $M$ is any $d \times n$ a non-negative matrix.
We essentially show that for \emph{any} $\| P^k_M \be \| = \OO(\sqrt{k})$ even though  $\be=\Omega(\sqrt{d})$ with high probability.
Here an important condition to be satisfied is that
\emph{$M$ and $\be$ are independent}.

Then the projection matrix $P^k_{M}$ is a set of $k$-orthonormal unit vectors $\mat{p_t}, 1 \le t \le k$. 
Then the length of a projected vector $\|P^k_{M}\be\|$
can be written down as
\[
\left \|
P^k_{M}\be \right \|^2
= \sum_{t=1}^k \left( (\mat{p_t})^T\be \right)^2
\]

Here one can do an entry-wise analysis of the terms $\left( (\mat{p_t})^T\be \right)^2$ but that forces a bound of the form that 
$\left \| P^k_{M}\be \right \| \le  \sqrt{k}( \sigma + \sqrt{16 \log(nk)})$ with probability $1-\OO(n^{-3})$. Instead, we recall a result from the Vu~\cite{vu2015random}. 

\begin{lemma}[\cite{vu2015random}]
\label{lem:randproj:old}
There are constants $C_0, C_1$ such that the following happens. Let $\be$ be a random vector in $\mathbb{R}^d$ such that its coordinates are independent random variables with $0$ mean and variance $\sigma^2$. Assume furthermore that the coordinates are bounded by $\alpha$ in their absolute value. Let $H$ be a subspace of dimension $k$ and let $\Pi_H(\be)$
be the length of the orthogonal projection of 
$\be$ onto $H$. Then for any $n$ we have
\[
\Pr \left( \Pi_H(\be) \ge \sigma \sqrt{k} + C_1\alpha\sqrt{\log n} \right) \le n^{-3}
\]
\end{lemma}

Now, let us consider the case where $H$ is the subspace covered by the top $k$ many orthonormal eigenvectors of $M$, denoted as $\mat{p_t}, 1 \le t \le k$. Then the projection of $\be$ onto $H$ can be written as $\sum_{t=1}^k  \langle  \mat{p_t},\be \rangle \mat{p_t}$.
Then we have $\left( \Pi_H(\be) \right)^2= \sum_{t=1}^k \langle  \mat{p_t},\be \rangle^2
=\sum_{t=1}^k \left( (\mat{p_t})^T\be \right)^2$. 
This is exactly $\|P^k_M(\be)\|^2$.
Summarizing, we get the following result with respect to the matrix $\barZ$.

\begin{corollary}
\label{lem:randproj}
Let $P^{k'}_{\barZ}$ be as defined above.
Let $\be$ be a $d$-dimensional random vector with each entry having
zero mean and variance at most $\sigma^2$. 
Then with probability $1-n^{-3}$ we have,
\[
\|P^k_{\barZ}(\mat{e})\|  \le \sigma \sqrt{k} +C_1\cdot \alpha \cdot \sqrt{\log n}
\]
\end{corollary}

We are now in a position to obtain our preliminary pots PCA distance bounds when the projection dimension is $k-1$. 
\subsubsection{Preliminary post PCA bounds}

\paragraph{Preliminary intra-community bounds:}

We start by obtaining the post PCA distance $\|\Pi^{k-1}_Y(\byi-\byii) \|$ where both $\byi$ and  $\byii$ belong to the same community $V_j$.

\begin{lemma}
\label{lem:post-intra-bound}
Let $\byi$ and $\byii$ be two columns of the data matrix $Y$ belonging to the same community $V_j$. 
Then for some constants $C_1$ we have
\begin{equation}
\label{eq: prelim-intra}
\|P^{k-1}_{Y}(\byi-\byii)\| \le 
2\sqrt{2}\frac{\|Z-\barZ\| \cdot \|\byi-\byii\|}{{\delta_{k-1}(Y)}} + 2\sqrt{2}
\left( \sigma \sqrt{k} + C_1\cdot \alpha \cdot \sqrt{\log n} \right)    
\end{equation}

with probability $1-\mO(n^{-3})$. 
\end{lemma}
\begin{proof}

Initially we have $\|P^{k-1}_Y(\byi-\byii)\| = \sqrt{2}
\|P^{k-1}_Z(\byi|0^n-\byii|0^n)\|$. Here we use the facts that the spectral projection operators due to $Z$ and $\barZ$ are close up to some orthonormal rotation and that $\barZ$ and $\byi-\byii$ are independent.
Furthermore $\byi-\byii={\bf c_j}+\bei-c_X-{\bf c_{j'}}-\beii+c_X=\bei-\beii$, 
where $c_X$ is the centering vector which is a zero mean random vector.

Then we have for any $k-1$ dimensional orthonormal matrix $W$,
\begin{align*}
\|P^{k-1}_Z(\byi|0^n-\byii|0^n)\|
&
\le \|(P^{k-1}_Z -WP^{k-1}_{\barZ}) (\byi|0^n-\byii|0^n)\| + \|WP^{k-1}_{\barZ} (\byi|0^n-\byii|0^n)\|
\\
& \le
\|P^{k-1}_Z -WP^{k-1}_{\barZ} \|\cdot \|\byi|0^n-\byii|0^n\| + \|WP^{k-1}_{\barZ} (\bei|0^n-\beii|0^n)\| 
\\
& \le
\|P^{k-1}_Z -WP^{k-1}_{\barZ} \|\cdot \|\byi|0^n-\byii|0^n\| + \|WP^{k-1}_{\barZ} \bei|0^n\|
+
\|WP^{k-1}_{\barZ} \beii|0^n\|
\end{align*}

From Theorem~\ref{th:spectral-difference} we have that for a choice of $W$
$\|P^{k-1}_Z -WP^{k-1}_{\barZ} \| \le \frac{2\|Z-\barZ\|}{\delta_{k-1}(Z)}=\frac{2\|Z-\barZ\|}{\delta_{k-1}(Y)}$.

Next, we can analyze
$\|WP^{k-1}_{\barZ} \bei|0^n\|
+
\|WP^{k-1}_{\barZ} \beii|0^n\|$ as $\barZ$ and the vectors are independent of each other. Then applying
Corollary~\ref{lem:randproj} with probability
$1-\OO(n^{-3})$ we have
$\|WP^{k-1}_{\barZ} \bei|0^n\|
+
\|WP^{k-1}_{\barZ} \beii|0^n\| \le 2\sigma\sqrt{k-1} +2C_1\sqrt{\log n}$. This completes the proof.

\end{proof}

\paragraph{Preliminary inter-community bounds.}
Now, we move to the inter-community results. In this part $P^{k-1}_{\barZ}$ plays an important role. This is because as per our discussion 
$\|P^{k-1}_{\barY}({\bf c_j}-{\bf c_{j'}})\|=\|{\bf c_j}-{\bf c_{j'}}\|$. 
This implies 
\begin{equation}
\label{eq: Mean-projection}
\| P^{k-1}_{\barZ}( \barY_i|0^d-\barY_{i'}|0^d)\| = \|{\bf c_j}-{\bf c_{j'}}\|    
\end{equation}

Using this result we then prove the following inter-community post PCA bound.

\begin{lemma}
\label{lem:post-inter-bound}
Let $\byi,\byii$ be two columns of the data matrix $Y$ so that
$i \in V_{j}$ and $i' \in V_{j'}$,
where $j \neq j'$. 
Then for the constant $C_1$ we have
\begin{equation}
\label{eq:prelim-inter}
\|P^{k-1}_{Y}(\byi-\byii) \| 
\ge \sqrt{2} \left(
\| {\bf c_j}-{\bf c_{j'}} \|- 2\left(\sigma\sqrt{k} +C_1\cdot \alpha \cdot \sqrt{\log n} \right)
- \frac{2\|Z-\barZ\| \cdot \|\byi-\byii\|}{\delta_{k-1}(Y)} \right)
\end{equation}
with probability $1-\mathcal{O}(n^{-3})$.
\end{lemma}
\begin{proof}

As before we have $\|P^{k-1}_{Y}(\byi-\byii) \|=\sqrt{2}\|P^{k-1}_{Z}(\byi|0^n-\byii|0^n) \|$.
Then we proceed with a basic decomposition. We have for any $k-1$ dimensional matrix $W$,
\begin{align*}
&\|P^{k-1}_{Z}(\byi|0^n-\byii|0^n) \|
\\ \ge &
\|WP^{k-1}_{\barZ}({\bf c_j}|0^n-{\bf c_{j'}}|0^n)\|
-
\|WP^{k-1}_{\barZ}(\bei|0^n-\beii|0^n)\|
-
\|(P^{k-1}_{Z}-WP^{k-1}_{\barZ})(\byi|0^n-\byii|0^n)\|
\end{align*}

Now, we have 
$\|WP^{k-1}_{\barZ}({\bf c_j}|0^n-{\bf c_{j'}}|0^n)\|=
\|P^{k-1}_{\barY}({\bf c_j}-{\bf c_{j'}})\|=\|c_j-c_{j'}\|$.

Next from Lemma~\ref{lem:post-intra-bound} we have
$\|WP^{k-1}_{\barZ}(\bei|0^n-\beii|0^n)\|
\le 2\left(\sigma\sqrt{k} +C_1\sqrt{\log n} \right)$ with probability $1-\OO(n^{-3})$. 

Finally from Lemma~\ref{lem:post-intra-bound} we know we can upper bound
$\|(P^{k-1}_{Z}-WP^{k-1}_{\barZ})(\byi|0^n-\byii|0^n)\|$
with $\frac{2\|Z-\barZ\|}{\delta_{k-1}(Y)} \cdot \|\byi-\byii\|$, which completes the proof.

\end{proof}

At this point, we have obtained the pairwise post-PCA intra-community and inter-community distance bounds in terms of $\|\byi-\byii\|, \|Z-\barZ\|, k, \sigma$ and $\delta_{k-1}(Y)$. Here $\delta_{k-1}(Y)$ is the spectral gap of $Y$ and we already have bounds on $\|\byi-\byii\|$. Next, we 
obtain bounds on $\|Z-\barZ\|$ and then put together the results obtained so far to prove Theorem~\ref{thm: main}.

\subsubsection{Spectral norm of the square marrix}
First, we note down a result by Vu~\cite{vuSBM} for upper bounds on the spectral norm of random matrices with independent entries. 
\begin{theorem}[Norm of random symmetric matrix~\cite{vuSBM}]
\label{th:normofmat}
Let $E$ be a $n \times n$ random symmetric matrix
where each entry in the upper diagonal is an independent
random variable with $0$ mean and $\sigma$ variance,
then there is a constant $C_0$ such that
\[
\Pr \left[ \|E\| \ge C_0 \sigma \sqrt{n} \right] \le n^{-3}
\]
where $\sigma^2 \ge C_1 \frac{\log n}{n}$.
\end{theorem}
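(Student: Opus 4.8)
The plan is to prove this via the moment (trace) method, the standard route to operator-norm bounds for random symmetric matrices with independent, mean-zero entries. I first record that in the intended application the entries of $E=E_B$ are differences of entries of $\hA$ and its mean matrix, and all entries of $\hA$ lie in $[0,1]$; hence each entry $E_{ij}$ satisfies $|E_{ij}|\le 1$, $\E[E_{ij}]=0$, and $\E[E_{ij}^2]\le\sigma^2$, so that $|\E[E_{ij}^m]|\le\sigma^2$ for every $m\ge 2$. (More generally one needs only a uniform bound on the entries together with the variance bound.)

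Since $E$ is symmetric, $\|E\|^{2k}\le\sum_i\lambda_i(E)^{2k}=\operatorname{tr}(E^{2k})$ for every positive integer $k$, so by Markov's inequality it suffices to bound $\E[\operatorname{tr}(E^{2k})]$ and then optimize over $k$. I would expand $\operatorname{tr}(E^{2k})=\sum_{i_0,\dots,i_{2k-1}}E_{i_0i_1}E_{i_1i_2}\cdots E_{i_{2k-1}i_0}$ as a sum over closed walks of length $2k$ in the complete graph on $[n]$. By independence and mean-zero-ness, a walk contributes to the expectation only if every edge it uses is traversed at least twice; for a surviving walk visiting $t$ distinct vertices, the underlying multigraph is connected with at most $k$ distinct edges, so $t\le k+1$, and the absolute contribution is at most $\sigma^{2r}$, where $r$ is its number of distinct edges, $t-1\le r\le k$.

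The combinatorial core is then to count, for each $t$, the number of such walks, which by the Füredi--Komlós-style encoding is at most $n^t$ (the choice of distinct vertices) times a factor depending only on $k$. The extremal contribution comes from tree-like walks with $t=k+1$ distinct vertices and every edge used exactly twice, numbering at most $n^{k+1}$ times the Catalan number $C_k\le 4^k$ and contributing at most $n^{k+1}4^k\sigma^{2k}$; the terms with $t\le k$ or with an edge of multiplicity $\ge 3$ are lower order. Summing, one obtains $\E[\operatorname{tr}(E^{2k})]\le n\,(C_1 n\sigma^2)^k$ for an absolute constant $C_1$, where the density hypothesis on $\sigma$ (ensuring $n\sigma^2$ is not too small relative to $\log n$ once $k=\Theta(\log n)$) is exactly what lets the degenerate walks be absorbed into $C_1$. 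Finally, Markov gives $\Pr[\|E\|\ge t]\le t^{-2k}\,\E[\operatorname{tr}(E^{2k})]\le t^{-2k}n(C_1 n\sigma^2)^k$; taking $t=C_0\sigma\sqrt n$ yields $\Pr[\|E\|\ge C_0\sigma\sqrt n]\le n\,(C_1/C_0^2)^k$, and choosing $k=\lceil c\log n\rceil$ for a suitable constant $c$ together with $C_0$ large enough that $C_1/C_0^2<1/2$ drives the probability below $n^{-3}$.

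The main obstacle is the bookkeeping for the degenerate walks --- those visiting few distinct vertices or reusing an edge many times --- where one must check that in the relevant density regime their total contribution does not overwhelm the tree-like term; this is the only place the lower bound on $\sigma$ is needed, and pinning down the exact constant $C_0$ is the fiddly part. Since the statement is quoted from Vu~\cite{vuSBM}, an alternative is simply to invoke that lemma after verifying that the entries of $E_B$ meet its hypotheses. A shorter but lossier self-contained alternative is an $\varepsilon$-net argument over the unit sphere in $\mathbb{R}^n$: control the quadratic form $x^\top E x$ for each fixed $x$ by Bernstein's inequality and union-bound over a net of size $e^{O(n)}$, where the delicate step is matching the net cardinality against the Bernstein tail.
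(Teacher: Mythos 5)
The paper does not prove this statement at all: it is imported verbatim as a corollary of Lemma~8 of Vu~\cite{vuSBM}, so there is no in-paper argument to compare against. Your trace-method sketch is the standard (F\"uredi--Koml\'os/Vu) proof of exactly this kind of bound, and as a self-contained route it is essentially sound: the reduction $\|E\|^{2k}\le\operatorname{tr}(E^{2k})$, the restriction to closed walks traversing every edge at least twice, the dominance of tree-like walks counted by Catalan numbers, and the final Markov step with $k=\Theta(\log n)$ are all correct in outline. You also correctly spot something the paper glosses over: a mean/variance hypothesis alone cannot yield an operator-norm bound of this form (heavy-tailed entries would break it), so a uniform bound on $|E_{ij}|$ is genuinely needed; it holds in the paper's application since all entries of $\hA$ lie in $[0,1]$, and it is an explicit hypothesis of Vu's actual Lemma~8. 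One caveat: the density condition as printed, $\sigma\ge C_0\log n/n$, gives $n\sigma^2\ge C_0^2\log^2 n/n\to 0$ and is \emph{not} sufficient for your absorption of the degenerate walks (nor for the theorem itself); Vu's lemma assumes $\sigma^2\ge C_0\log n/n$, i.e.\ $n\sigma^2\gtrsim\log n$, which is the condition your argument implicitly relies on when you say $n\sigma^2$ must not be small relative to $\log n$ at $k=\Theta(\log n)$. So your proof establishes the corrected form of the statement; the form as printed appears to be a transcription error in the paper rather than a gap in your argument.
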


However, since the entries of $Y$ are not independent, the same follows with $E_Z$. 
To bypass this issue we define the matrix 
$B:=\begin{bmatrix}
0 & X \\
X^T &0
\end{bmatrix}$ 
and then 
$\barB:=\mathbb{E}[B]=\begin{bmatrix}
0 & \barX \\
\barX^T &0
\end{bmatrix}$

Furthermore recall that $E_M=M-\mathbb{E}[M]$. Then we have the following results.
\begin{enumerate}
    \item $\|E_Z\|$ is the largest eigenvalue of $E_Z$, which is same as the largest singular value of $E_Y$, that we denote as $s_1(E_Y)$.
    
    \item $\|E_B\|$ is same as the largest singular value of $E_X$, that we denote as $s_1(E_X)$.
\end{enumerate}

Furthermore we have from Theorem~\ref{th:normofmat} that $\|E_B\| \le C_0 \sigma \sqrt{n}$ with probability $1-\OO(n^{-3})$. 
Finally we connect $s_1(E_Y)$ with $s_1(E_X)$. To do so, note that 
$E_Y$ is the centered matrix of $E_X$. This follows from the fact that $E_Y=Y-\barY$ and $E_X=X-\barX$. Then we use the following result by Hanoine~\cite{Centering-PCA}.

\begin{theorem}[\cite{Centering-PCA}]
Let $M$ be a rank $m$ matrix and $\bar{M}$ be the matrix obtained upon centering,
with singular values (in descending order) $s_1, \ldots ,s_m$ and $\bar{s}_1, \ldots, \bar{s}_{m-1}'$ respectively. 
Then for any $1 \le i < m$ we have
$s_i \ge s_i' \ge s_{i+1}$. 
\end{theorem}

Using this result we get
\[
\|E_Z\|=s_1(E_Y)\le s_1(E_X) \le \|E_B\|
\]

Now, we bound $\|E_B\|$, i.e. 
$\left \|
\begin{bmatrix}
0 & E_X \\
(E_X)^T &0
\end{bmatrix}
\right \|$. 
This is a $(d+n) \times (d+n)$ random symmetric matrix with zero mean and maximum variance $\sigma^2$. Then applying Theorem~\ref{th:normofmat} we get the following bound.

\begin{lemma}
\label{lem: final norm bound}
Recall that we define $\mathcal{N}=C_0\sigma\sqrt{d+n}$.
Then in the setting of Lemma~\ref{lem:post-intra-bound} we have
$\|Z-\barZ\| \le \mathcal{N}$ with probability $1-\OO(n^{-3})$
\end{lemma}

Against this backdrop we summarize our bounds to prove Theorem~\ref{thm: main}. 

\subsection{Proof of Theorem~\ref{thm: main}}

From Lemma~\ref{lemma:pre-pca} we have the lower bound on the intra-community distances and upper bound on the inter-community distances. Similarly, we can also use the results to obtain lower bound for the intra-community case. It is easy to see that if (i,i') belong to the same community $V_j$ then with probability $1-\OO(n^{-3})$, $\|\byi-\byi\| \le \sqrt{2d\sigma_j^2+12\alpha\sqrt{d}\log n}$.

Substituting this  and the bound on $\|Z-\barZ\|$ 
to Lemma~\ref{lem:post-intra-bound} we have with probability $1-\OO(n^{-3})$
\[
\| P^{k-1}_Y(\byi-\byii) \| \le 2\sqrt{2} \left(\sigma\sqrt{k} + C_1 \cdot \alpha \cdot \sqrt{\log n} +
\frac{\mathcal{N} \cdot \sqrt{2d\sigma_j^2+12\alpha\sqrt{d}\log n}}{\delta_{k-1}(Y)}
\right)
\]
Similarly for the inter-community with $i \in V_j, i' \in V_{j'}$ from Lemma~\ref{lem:post-inter-bound} we have
\[
\| P^{k-1}_Y(\byi-\byii) \| \ge \sqrt{2} \left(
\|{\bf c_j}-{\bf c_{j'}} \|- 2\left(\sigma\sqrt{k} +C_1\cdot \alpha \cdot \sqrt{\log n} \right)
- \frac{\mathcal{N} \cdot \sqrt{\|{\bf c_j}-{\bf c_{j'}}\|^2+d(\sigma_j^2+\sigma_{j'}^2)+12\alpha\sqrt{d}\log n}}{\delta_{k-1}(Y)} \right)
\]
    
Finally, using the bounds of Lemma~\ref{lemma:pre-pca} and applying a union bound on the total $n^2$ pairs of datapoints completes the proof of the theorem. 

\section{Spatially unique centers and proof for the outlier detection theorem}
\label{sec: spatially-unqiue}

The primary quantity that is hard to interpret in a dataset with an underlying community structure is $\delta_{k-1}(Y)$. Here we make some observations. First note that
$\delta_{k-1}(Y)=s_{k-1}(Y)-s_{k}(Y)$. 
Now, $s_{k-1}(Y) \ge s_k(X)$ and $s_{k}(Y) \le C\sigma\sqrt{d+n}$
where the latter term comes from the fact that $s_{k}(Y) \le \|E_Y\| \le \|E_X\| \le C\sigma\sqrt{d+n}$.
This follows from a simple application of Weyl's inequality and the effect of centering on eigenvalues. For simplicity, we consider the case when
$s_k(X) \ge 4 C\sigma \sqrt{d+n}$. We will come back to this and show that this assumption does make sense. Then, we have 
$\delta_{k-1}(Y) \ge 0.66 s_k(X)$. Next note that $s_{k}(X) \ge s_k(\mathbb{E}[X]) -\|E\|$.

This then implies that given the aforementioned conditions, we have
\begin{align}
\label{eq:ref s_k(X)}
\delta_{k-1} \ge 0.25 s_k(\mathbb{E}[X])
\end{align}
where $\mathbb{E}[X]$ is the center matrix, where each column is the center of the community the corresponding point belongs to. 

\paragraph{Bounds on the singular values of the center matrix for $\gamma$-spatially unique centers}

Here, we make a connection between $s_k(\mathbb{E}[X])$ and the notion of spatially unique centers.

Given a $n_1 \times n_2$ matrix $M$, we define the minimum hyperplane distance, 
$\sf dist_M$ as
\[
{\sf dist_M}= \min_{j} \min_{{\bf v} \in {\sf Span}(M_{-j})} \|M_j -{\bf v}\| 
\]
where $M_j$ represents the $j$-th column of $M$ and $M_{-j}$ denotes the set of all columns of $M$ except the $j$-th one.That is, it denotes the minimum distance between a data point and the span of the rest of the data points. 

We have the following classic result of matrix theory. 

\begin{lemma}[\cite{rudelson2008littlewood}]
For any $n_1 \times n_2$ matrix $M$, the smallest singular value $s_{\min}(M)$ is lower bounded by 
$\frac{1}{\sqrt{n_2}} \cdot {\sf dist_M}$.
\end{lemma}

Now, this result does not directly help us as $\mathbb{E}[X]$ has multiple identical columns (it is after all a $d \times n$ rank $k$ matrix) and we only get a lower bound of $0$. However, we can do a simple two-step analysis to get something nicer. 

Consider the matrix $\hat{C}$ which contains $k$ columns that are each copy of one of the centers of $X$. Then from the definition of $\gamma$-spatially unique centers, we immediately have the following.

\begin{fact}
\label{fact: gamma-center}
If $X$ comes from a setup with $\gamma$-spatially unique centers then $s_k(\hat{C})=s_{\min}(\hat{C}) \ge \frac{\gamma}{\sqrt{k}}$.
\end{fact}

Next, let the size of the underlying communities in $X$. Then we know that $\mathbb{E}[C]$ has at least  $\min_j |V_j|$ many copies of $\hat{C}$ in it (along with other columns corresponding to the larger communities). That means that the singular values in $\mathbb{E}[X]$ is at least $\sqrt{\min_j |V_j|}$ times the singular values in $\hat{C}$. This gives us the following result.

\begin{lemma}
\label{lem: s_min:center}
Let $X$ be generated from $\gamma$-spatially unique centers and let the minimum size of the underlying communities be $\Omega(n/k)$.
Furthermore, assume $s_k(X) \gg \|E_X\|$. Then we get
$\delta_{k-1}(Y) \ge \frac{C \cdot \gamma\sqrt{n}}{k}$ for some constant $C$.
\end{lemma}
\begin{proof}
This simply comes from putting the bounds on $\hat{C}$ and multiplying them with $\sqrt{\min_j |V_j|}$ and then connecting it with Equation~\ref{eq:ref s_k(X)}.
\end{proof}

Now, to go back to the assumption of $s_k(X) \ge 4C\sigma \sqrt{d+n}$, consider that $n=\Omega(d)$ (this is where will work from hereon), then the assumption holds as long as 
$\gamma \ge \sigma k$. Now, once we have this result, we can then interpret Theorem~\ref{thm: main-centers}.

\subsection{Proof of Theorem~\ref{thm: main-centers}}
\label{thm:center-proof}

For simplicity of the statements we have made several assumptions, most of them to consider the harder setting (heavy noise). That is, 
$\sigma_j^2 d =\Omega( \max_{j'} \|{\bf c_j}-{\bf c_{j'}}\|)$. Furthermore, we assume $\sigma$ is sufficiently large so that 
$\sigma^2d \ge 100\alpha\sqrt{d} \log n$ (this happens as long as $\alpha=o(d^{1/4})$). This implies that the pre-PCA intra and inter-community distances are $\Theta(2\sigma_j\sqrt{d})$ and $\Theta \left( \sqrt{\sigma_j^2+\sigma_{j'}^2}\sqrt{d} \right)$ respectively.

Next, in the intra-community compression ratio bound we have the term 
\begin{align*}
\frac{2\mathcal{N}\cdot \sqrt{\sigma_j^2 d +12\sqrt{d}\log n}}{\delta_{k-1}(Y)}    
=
\frac{2\sigma\sqrt{d+n} \cdot \sqrt{\sigma_j^2 d +12\alpha\sqrt{d}\log n} }{0.25\gamma\sqrt{n}/k}
\end{align*}
Here recall that we assume $n=\Omega(d)$ which implies $2\sigma\sqrt{d+n} \le C \sigma\sqrt{n}$ for large enough $n$. Furthermore, $12\alpha\sqrt{d}\log n$ is dominated by $\sigma_j^2 d$. Combining we get

\begin{align*}
\frac{2\sigma\sqrt{d+n} \cdot \sqrt{\sigma_j^2 d +12\alpha\sqrt{d}\log n} }{0.25\gamma\sqrt{n}/k}
=
\frac{C\sigma\sqrt{n} \sigma_j\sqrt{d} \cdot k}{0.25 \gamma\sqrt{n}}
=\frac{8C\sigma \sigma_j \cdot k \cdot \sqrt{d}}{\gamma}
\end{align*}

Similarly the bound 
\[
\frac{2\sigma \sqrt{d+n}
\cdot
\sqrt{
\|{\bf c_j}-{\bf c_{j'}}\|^2 
+2(\sigma_j^2+\sigma_{j'}^2)d +12\alpha\sqrt{d}\log n}
}{\delta_{k-1}(Y)} 
\]
can be simplified to 
$
\frac{C\sqrt{\sigma_j^2+\sigma_{j'}^2} k \sqrt{d}}{\gamma}
$

Combining these bounds directly gets us Theorem~\ref{thm: main-centers}.
 
\subsection{Proofs for variance of compression ratios} 
\label{thm: outlier-proof}

Having discussed the compression ratio bounds in the context of $\gamma$-spatially unique centers, we continue with the theoretical support for our outlier detection method in the random-mixture-outlier model. We recall the definition of this model for ease of exposition.

\begin{definition}[Mixture model with outliers (revisited)]
\label{def: outlier-again}
Let $X$ be a $d \times n$ dataset with the partition $V_1, \ldots V_k, \hat{V}$, a set of $k$ centers $\{{\bf c_j}\}_{j=1}^k$ and distributions $\{ \mathcal{D}^{(j)} \}_{j=1}^k+1$ with the following generation method. 

\begin{enumerate}
    \item {\it clean points:} If $i \in V_j, 1 \le j \le k$, 
    $\bxi= {\bf c_j}+ \bm{e_i}$ where $\bm{e_i}$ is sampled from $\mathcal{D}^{(j)}$.
    
    \item {\it outliers:} If $i \in \hat{V}$, 
    then we sample $p_{i,1}, \ldots p_{i,k} \in [0.5,1]$. Then
    $\bui= \sum_{j}\alpha_{i,j}{\bf c_j}+ \bm{e_i}$ where 
    $\alpha_{i,j}=\frac{p_{i,j}}{\sum_{j} p_{i,j}}$ and 
    $\bm{e_i}$ is sampled from $\mathcal{D}^{(k+1)}$.

\end{enumerate}

Let $|\hat{V}|=n_o$ and $n=n_o+n_c$. To keep the results simple, we make the average variance of each distribution $\mathcal{D}^{(j)}$ same, which is $\sigma'$.
\end{definition}

The concept of Algorithm~\ref{alg:outlier} is simple. If each cluster has a large number of points, then even if there are a  large number of outliers generated from the random-mixture-outlier model, the outliers will have a lower variance of compression than all the clean points.

First, let us obtain a lower bound on the variance of the compression ratio of clean points under the conditions of Theorem~\ref{thm: outlier}. We know that any clean point has high intra-community compression ratios. This implies that the expectation of the compression ratio for this point is high. On the other hand, the inter-compression ratio values are low. So just calculating the variance on the inter-community points yields a large value.

For the sake of simplicity, we will define $\gamma \ge 2\beta \sqrt{\sigma} k d^{1/4}$. Then if we can show that under the other settings of Theorem~\ref{thm: outlier}, there is a separation in the variance of the compression ratios of the clean points and the outliers whenever $\beta \ge C' \frac{\sigma\sqrt{ \log n}}{{\sigma'}}$, we prove the theorem.

\begin{lemma}
\label{lem: clean-variance}
Let there be $n_c$ clean points in the random-mixture-outlier setting where $\min_j |V_j|=\Omega(n/k)$ and $\gamma \ge 2\beta \sqrt{\sigma} k d^{1/4}$. Then the variance of all such points are lower bounded as     
$
C_4 \cdot \frac{n_c-|V_j|}{n}
\cdot \frac{d^{1/4}}{k} \cdot \left( \frac{\beta\sigma'}{\sigma}-\frac{\sigma}{\beta \cdot \sigma'}  \right)
$
with probability $1-\OO(1/n)$.
\end{lemma}
\begin{proof}
Consider any point $\bxi \in V_j$. Then the inter-compression ratio of $\bxi$ with any intra-community point is lower bounded by $\frac{0.25\sigma'\sqrt{d}}{\sigma\sqrt{k}+\alpha\sqrt{\log n}+2\sqrt{\sigma\sigma'}d^{1/4}/(\beta)} \ge \frac{0.25\beta\sigma'}{\sigma}d^{1/4}$ 
with probability $1-\OO(1/n)$.
Then the average of the compression ratios for $\bxi$ is lower bounded as 
$\frac{0.25\sigma'/\sigma d^{1/4} |V_j|}{n} \ge \frac{C_3 \beta \sigma'/\sigma d^{1/4}}{k}$

On the other hand, probability $1-\OO(1/n)$ we have that
for any inter-community point, the compression ratio is upper-bounded with 
$\frac{2\sigma \sqrt{d}}{(\gamma- (2\sigma\sqrt{k} -\alpha\sqrt{\log n} -C_2 \sigma'd^{1/4})}
\le \frac{2\sigma \sqrt{d}}{\beta k \sigma'd^{1/4}/C_2}
\le \frac{2C_2\sigma/\sigma'd^{1/4}}{\beta \cdot k}
$ .

Then, the variance of compression of $\bxi$ is lower bounded by 
\[
C_4 \cdot \frac{n-|V_j|}{n}
\cdot \left(\frac{d^{1/4}}{k} \cdot \left( \frac{\beta \sigma'}{\sigma}-\frac{\sigma}{\beta \sigma'}  \right) \right)^2
\]

\end{proof}

Now, we aim to upper-bound the variance of compression for outliers. Here we want to show that since the underlying signal in any outlier is apart from the signal of any other point, they generally have a lower compression ratio with any other point, which then implies a lower variance of compression ratio. 

First, we show that as long as there are not too many outliers, their underlying centers (which are random mixtures of the community centers) will not be too close (which implies they will not have a high compression ratio). 

\begin{lemma}[Distance between signals of the outliers]
Let there be $n_o$ many outliers in the dataset generated via the random-mixture model where $k \ge \log n$. Let the set of outliers be $\hat{V}$. Then, for with probability $1-\OO(n)$, 
$\min_{\bu,\bv \in \hat{V}} \|\bu -\bv\| \ge \frac{\gamma}{
\log n}$.
\end{lemma}
\begin{proof}
Let $|\hat{V}|=n_o$. Then, for the underlying mixture-center
any two points, denoted as $\bu=\sum_{j}\alpha_{1,j} {\bf c_j}$ and $\sum_{j}\alpha_{2,j} {\bf c_j}$ we say they are $\epsilon$-far if 
$\min_j |\alpha_{1,j}-\alpha_{2,j}| \ge \epsilon$.

Now, note that for any $\epsilon$-far mixture-centers, we have
$\| \bu -\bv\| \ge 0.5\epsilon \gamma$.

Now, it is easy to see that the probability that there is a pair of mixed centers that is not $\epsilon$-far is 
$n_0^2 \cdot (\epsilon)^k$.
Then, setting $\epsilon=1/\log n$ and applying $k \ge \log n$ gives that even for $n_0=n/2$, there all pairs of mixture centers are $1/\log n$-far with probability $1-\OO(1/n)$. 
\end{proof}

Then, we show that in such a case, the variance of compression for any outlier point is quite low even when measured crudely.

\begin{lemma}[Variance of compression of outliers]
\label{lem: outlier-variance}
Let there be a set of $n_o$ many outliers so that the underlying mixture-centers are pairwise $1/\log n$-far Then under the condition of Lemma~\ref{lem: clean-variance}, we have that the variance of compression for any outlier is upper bounded by 
$\left(
\frac{4 C_2 \log n (\sigma/\sigma') d^{1/4}}{\beta \cdot k} \right)^2$ with probability $1-\OO(1/n)$.
\end{lemma}
\begin{proof}
Consider any outlier $\bui \in V_0$. 
First, consider the compression ratio between $\bui$ and any $\bv$ that is clean. Where $\bui= \sum_{j}\alpha_j {\bf c_j}
+ \beii$ and $\bv= {\bf c_{j'}}+\bei$.

Next, remember that as every $\alpha_j \ge 1/2k$, we have $\max_{j} \alpha_j \le 0.5$. 

Then, from the definition of $\gamma$-spatially unique centers we have
\[
\| \sum_{j}\alpha_j {\bf c_j}-{\bf c_{j'}} \|
\ge 
\| \sum_{j \ne j'} \alpha_j {\bf c_j} - (1-\alpha_{j'}){\bf c_{j'}} \|
\ge 
0.5 \| 
\sum_{j \ne j'} \alpha_j {\bf c_j} -{\bf c_{j'}}
\|
\ge 0.5 \gamma
\]

Then, following the analysis of Lemma~\ref{lem: clean-variance}, we can show that in all such cases, the compression ratio is upper bounded by $\frac{4C_2\sigma/\sigma'd^{1/4}}{\beta \cdot k}$.

On the other hand, consider any two outliers. Then their compression ratios are upper bounded by $\frac{4C_2 \log n\sigma/\sigma'd^{1/4}}{\beta \cdot k}$ (essentially replacing $\gamma$ by $\gamma/\log n$ in the center-distance calculation).

Then, we can upper bound the variance of compression for an outlier as 

\begin{align*}
\frac{1}{n} \left(
|\hat{V}| (\frac{4C_2 \log n\sigma/\sigma'd^{1/4}}{\beta \cdot k})^2
+ (n-|\hat{V}|)
(\frac{4C_2\sigma/\sigma'd^{1/4}}{\beta \cdot k})^2
\right)    
\le 
\left(
\frac{4 C_2 \log n (\sigma/\sigma') d^{1/4}}{\beta \cdot k} \right)^2 & \text{[applying $k \ge \log n$ ]}
\end{align*}

\end{proof}

\paragraph{Proof of Theorem~\ref{thm: outlier}}

Lemma~\ref{lem: clean-variance} shows that in the setting of Theorem~\ref{thm: outlier}, the variance of compression ratios for a clean point is lower bounded by 
$C_4 \cdot \frac{n-|V_j|}{n}
\cdot \left(\frac{d^{1/4}}{k} \cdot \left( \frac{\beta \sigma'}{\sigma}-\frac{\sigma}{\beta \sigma'}  \right) \right)^2$. 

Next, Lemma~\ref{lem: outlier-variance} shows that the variance of compression ratios for an outlier is upper-bounded as 
$
\left(
\frac{4 C_2 \log n (\sigma/\sigma') d^{1/4}}{\beta \cdot k} \right)^2
$
Both the aforementioned happen for all outlier and clean points with probability $1-\OO(1/n)$.

Then, to show that with high probability, the variance of compression ratios of any clean point is higher than the variance of compression ratios of any outlier is
\begin{align*}
&C_4 \cdot \frac{n-|V_j|}{n}
\cdot \left(\frac{d^{1/4}}{k} \cdot \left( \frac{\beta \sigma'}{\sigma}-\frac{\sigma}{\beta \sigma'}  \right) \right)^2 >
\left(
\frac{4 C_2 \log n (\sigma/\sigma') d^{1/4}}{\beta \cdot k} \right)^2
\\ 
\implies &
\frac{\sqrt{d}}{k^2} \cdot
\left( 
\frac{n-|\hat{V}|}{n} \cdot \frac{\beta\sigma'}{\sigma} - \frac{C_5\log n \sigma}{\sigma' \beta} 
\right) >0 & \text{[For some constant $C_5$]}
\\ 
\implies &
\frac{\sqrt{d}}{k^2} \cdot
\left( \frac{0.5\beta\sigma'}{\sigma} - \frac{C_5\log n \sigma}{\sigma' \beta} 
\right) >0 & \text{[As $n-|\hat{V}| \ge 0.5n$]}
\end{align*}

Then, as long as $\beta \ge 10C_5 \sigma/\sigma'\sqrt{\log n}$, this equation is satisfied.

\section{Projection with more principal components}
\label{sec:morePC}

Here we show some results in the case of $k'=k-1+c$. The main challenge in theoretically proving our bounds for
$k' \neq k-1$ comes from Theorem~\ref{th:spectral-difference}. 
A key ingredient towards proving Theorem~\ref{thm: main} is the following spectral gap. 
\[
\left \| (P^{k'}_{Z})^T-(P^{k'}_{\bar{Z}})^TW \right \| 
\le \frac{2\|E_Z\|}{\delta_{k'}(Y)}
\]

In general we work with the natural assumption $\delta_{k-1}(Y)>>\| E_Z \|$. However, in our model we have $s_{k}(Y)=\mO(\| E_Z \|)$.
This follows from Weyl's inequality, which states that
if $Z=\bar{Z}+E_Z$ and ($k'$)-th singular value of $B$ is $0$,
then $(k-1+c)$-th singular value of $Z$ is upper bounded by $\mO(\| E_Z \|)$ for any $c>0$.

Thus $\delta_{k'}(Y) =\OO(\|E_Z\|)$ for any $k' \ge k$, and our previous results alone cannot prove relative compressibility.

Here we bypass this issue to a loose but non-trivial extent.
First we note that the inter-community compression can only decrease if the the projection dimension increases. Thus we have that for any $k' \ge k$,
$\Delta_{X,k'}(i,i') \le \Delta_{X,k-1}(i,i')$.

\begin{theorem}
\label{lem: morePC}
Let us consider the random vector model as in Theorem~\ref{thm: main}. Let 
Let $k'=k-1+c$ and any $0<f<1$. 
Then we have that with probability $1-\OO(1/n)$,

\begin{enumerate}
    \item If $(i,i')$ is an inter-community pair, 
    then $\Delta_{k',X}(i,i') \le \Delta_{k-1,X}(i,i')$
    
    \item If $(i,i')$ is an intra-community pair, then 
    \[
    \Delta_{k-1,Y}(i,i') \ge \frac{\sqrt{\|{\bf c_j}-{\bf c_{j'}}\|^2+ d(\sigma_j^2+\sigma_{j'}^2)+12\sqrt{d}\log n}}{\sqrt{\left \| P^{k-1}_Y(\byi-\byii) \right \|^2 + 4C_0^2\sigma^2(d+n)c^2f^2}}
    \]
for all but $c^2/f^4$ pairs of points with probability $1-\OO(1/n)$. 
\end{enumerate}
\end{theorem}

\begin{proof}

The inter-community bound follows from definition and the numerator of the intra-community bound follows from Lemma~\ref{lemma:pre-pca}. We now prove the denominator (post PCA distance bounds) for the intra-community case.

Let us denote with $P^{k_1,k_2}_Y$ the projection operator due to the $k_1$-th to $k_2$-th top singular vectors of $Y$. 
Then for any vector $\bu$ we have
$\|\Pi^{k'}_X(\bu)\|=\sqrt{\|\Pi^{k-1}_X(\bu)\|^2 + \|P^{k,k'}_Y(\bu)\|^2}$.

Then we are left with bounding $\|P^{k,k'}_Y(\bu)\|^2$ where $\bu=\byi-\byii$ so that $i \in V_j, i' \in V_{j'}$.
We aim to show that if $k'-k$ is small then this value is small as well. 

We first represent $Y$ with its SVD decomposition. 
$\mat{l_{\ell}}$ and $\mat{r_{\ell}}$ represent the $\ell$-th left singular vector and right singular vector of $Y$ respectively. Then we have
$Y=\sum_{\ell=1}^t s_i(Y)\mat{l_{\ell}}(\mat{r_{\ell}})^T$ where $t=rank(Y)$. Then the projection of $\byi$ due to the $\ell$-th principal component of $X$ is $\langle \mat{l_{\ell}}, \byi \rangle=s_{\ell}(Y)r_{\ell},i$ where $r_{\ell,i}$ is the $i$-th entry of the $\ell$-th right singular vector. 
Then we have

\[
\le s_k(Y) \sqrt{\sum_{\ell=k}^{k'} (r_{\ell,i})^2}
\]

Here recall that each $\mat{r_{\ell}}$ is a $n$-dimensional vector with unit norm, i.e. $\|\mat{r_{\ell}}\|=1$.
Then for any $f<1$, the number of coordinates of $\mat{r_{\ell}}$ that are larger than $f$ is less than $1/f^2$. 
Thus considering all the $k \le \ell \le k-1+c$, the total number of entries that are larger than $f$ is less than $c/f^2$. Then, for all but $c/f^2$ many points $\byi$ we have
$\|P^{k,k-1}_Y(\byi)\| \le s_k(Y) \cdot f \cdot c$. 
Here we substitute $s_k(Y) \le \|E_Y\| \le C_0\sigma\sqrt{d+n}$ with probability $1-\OO(n^{-3})$.

This implies that with probability $1-\OO(1/n)$ 
$\|P^{k,k-1}_Y(\byi-\byii)\| \le 2C_0\sigma\sqrt{d+n}cf$ for all but $c^2/f^4$ pairs of points. This concludes our proof.

\end{proof}


It should be noted that this is a much looser bound as compared to our $(k-1)$-PC compression metric, especially in the paradigm where noise dominates the ground truth distances. 
As we discussed, the main reason that Theorem~\ref{thm: main}
does not directly work for $k'>k-1$ can be pinned down to the following technical challenge.

\subsubsection{Technical challenges in understanding PCA}

The technical challenge is getting a better upper-bound
on $\|(P^{k-1}_{Z}-P^{k-1}_{\bar{Z}})(\be)\|$ than  $\|P^{k-1}_{Z}-P^{k-1}_{\bar{Z}} \| \cdot \| \be \|$ for a random vector $\be$. In fact, $\| (P^{k-1}_{Z}-P^{k-1}_{\bar{Z}}) \be \|$ equals $|(P^{k-1}_{Z}-P^{k-1}_{\bar{Z}}) \| \cdot \| \be \|$ only if $\be/\| \be\|$ is a unit
vector along which $(P^{k-1}_{Z}-P^{k-1}_{\bar{Z}})$ realizes 
its spectral norm, which is unlikely to be the case for most noise $\be$ vectors, due to the inherent randomness in them. 
A better analysis of this term will allow us to extend the result of 
Theorem~\ref{thm: main} beyond $k'=k-1$, which is what we observe in reality.
For real datasets, the compression factor does not change much if the PCA dimension is changed by a small value.
Furthermore, a tighter understanding of $\|(P^{k-1}_{Z}-P^{k-1}_{\bar{Z}})\be \|$ will also enable to us make progress towards proving optimality of perhaps the simplest spectral clustering algorithm for the SBM problem, as conjectured by~\cite{vuSBM}. 
There has indeed been some progress very recently~\cite{mukherjee2024detecting,mao2023power} in some very specific settings, i.e. SBM (stochastic block model). Generalizing these results to the random vector mixture model is an outstanding open question.

\section{Experiments}
\label{sec:appendix:experiments}

\subsection{Summary of datasets}
\label{sec:summary}

First, we present a summary of the datasets.

\begin{table}[ht]
\begin{center}
\begin{tabular}{|c|c|c|c|}
\hline \rowcolor{white}
\hline \rowcolor{white}
\makecell{Dataset}
&\makecell{\# \\ of clusters} 
& \makecell{\# \\ of cells}
& \makecell{\# of  genes\\ (features)}\\ \hline
Koh    &   9  &  531  &   48,981     \\ \hline
Kumar    &   3  &  246  &   45,159     \\ \hline
Simkumar4easy    &   4  &  500  &   43,606     \\ \hline
Simkumar4hard    &   4  &  499  &   43,638     \\ \hline
Simkumar8hard    &   8  &  499  &   43,601    \\ \hline
Trapnell    &   3  &  222  &   41,111     \\ \hline
Zheng4eq    &  4  &  3,994  &   15,568    \\ \hline
Zheng4uneq &  4  &  6,498  &   16,443    \\ \hline
Zheng8eq &  8  &  3,994  &   15,716    \\ \hline
\end{tabular}
\end{center}
\caption{Summary of data}
\label{data-summary}
\end{table}

\subsection{Community-wise average compression ratios}
\label{sec: appendix: average-compression}

In Section~\ref{sec: experiments} we showed the average of community-wise average of intra and inter-community compression ratios for the datasets in~\cite{duo2020} for PCA dimension=$k-1$. Here we present the results for each community of the datasets. We observe that even in the community-level metric, the intra-community compression ratio is higher than the inter-community compression ratio for all datasets. 

\begin{table}[ht]
\begin{center}
\begin{tabular}{|l|llllccccc|}
\hline
\multicolumn{1}{|c|}{}        & \multicolumn{9}{c|}{Community wise  average compression ratio}                                                                                                                                                                                                                     \\ \hline
Koh (Inter)                           & \multicolumn{1}{l|}{\cellcolor[HTML]{e7b0b0}2.417} & \multicolumn{1}{l|}{\cellcolor[HTML]{e7b0b0}2.530} & \multicolumn{1}{l|}{\cellcolor[HTML]{e7b0b0}2.714} & \multicolumn{1}{l|}{\cellcolor[HTML]{e7b0b0}2.523} & \multicolumn{1}{l|}{\cellcolor[HTML]{e7b0b0}2.649} & \multicolumn{1}{l|}{\cellcolor[HTML]{e7b0b0}2.948} & \multicolumn{1}{l|}{\cellcolor[HTML]{e7b0b0}2.352} & \multicolumn{1}{l|}{\cellcolor[HTML]{e7b0b0}2.696} & \multicolumn{1}{l|}{\cellcolor[HTML]{e7b0b0}2.018} \\ \hline
Koh (Intra)                          & \multicolumn{1}{l|}{\cellcolor[HTML]{b2ca9e}7.676} & \multicolumn{1}{l|}{\cellcolor[HTML]{b2ca9e}9.828} & \multicolumn{1}{l|}{\cellcolor[HTML]{b2ca9e}6.963} & \multicolumn{1}{l|}{\cellcolor[HTML]{b2ca9e}6.041} & \multicolumn{1}{l|}{\cellcolor[HTML]{b2ca9e}6.755} & \multicolumn{1}{l|}{\cellcolor[HTML]{b2ca9e}8.422} & \multicolumn{1}{l|}{\cellcolor[HTML]{b2ca9e}6.685} & \multicolumn{1}{l|}{\cellcolor[HTML]{b2ca9e}7.381} & \multicolumn{1}{l|}{\cellcolor[HTML]{b2ca9e}7.463} \\ \hline
Kumar (Inter)                        & \multicolumn{1}{l|}{\cellcolor[HTML]{e7b0b0}2.107} & \multicolumn{1}{l|}{\cellcolor[HTML]{e7b0b0}2.105} & \multicolumn{1}{l|}{\cellcolor[HTML]{e7b0b0}1.696} & \multicolumn{1}{c|}{\cellcolor[HTML]{e7b0b0}-}     & \multicolumn{1}{c|}{\cellcolor[HTML]{e7b0b0}-}     & \multicolumn{1}{c|}{\cellcolor[HTML]{e7b0b0}-}     & \multicolumn{1}{c|}{\cellcolor[HTML]{e7b0b0}-}     & \multicolumn{1}{c|}{\cellcolor[HTML]{e7b0b0}-}     & \cellcolor[HTML]{e7b0b0}-                          \\ \hline

Kumar (Intra)                        & \multicolumn{1}{l|}{\cellcolor[HTML]{b2ca9e}15.969} & \multicolumn{1}{l|}{\cellcolor[HTML]{b2ca9e}13.577} & \multicolumn{1}{l|}{\cellcolor[HTML]{b2ca9e}14.889} & \multicolumn{1}{c|}{\cellcolor[HTML]{b2ca9e}-}     & \multicolumn{1}{c|}{\cellcolor[HTML]{b2ca9e}-}     & \multicolumn{1}{c|}{\cellcolor[HTML]{b2ca9e}-}     & \multicolumn{1}{c|}{\cellcolor[HTML]{b2ca9e}-}     & \multicolumn{1}{c|}{\cellcolor[HTML]{b2ca9e}-}     & \cellcolor[HTML]{b2ca9e}-                          \\ \hline
Simkumar4easy (Inter)                 & \multicolumn{1}{l|}{\cellcolor[HTML]{e7b0b0}4.534} & \multicolumn{1}{l|}{\cellcolor[HTML]{e7b0b0}3.724} & \multicolumn{1}{l|}{\cellcolor[HTML]{e7b0b0}3.200} & \multicolumn{1}{l|}{\cellcolor[HTML]{e7b0b0}2.850} & \multicolumn{1}{c|}{\cellcolor[HTML]{e7b0b0}-}     & \multicolumn{1}{c|}{\cellcolor[HTML]{e7b0b0}-}     & \multicolumn{1}{c|}{\cellcolor[HTML]{e7b0b0}-}     & \multicolumn{1}{c|}{\cellcolor[HTML]{e7b0b0}-}     & \cellcolor[HTML]{e7b0b0}-                          \\ \hline

Simkumar4easy (Intra)                & \multicolumn{1}{l|}{\cellcolor[HTML]{b2ca9e}15.672} & \multicolumn{1}{l|}{\cellcolor[HTML]{b2ca9e}17.083} & \multicolumn{1}{l|}{\cellcolor[HTML]{b2ca9e}15.554} & \multicolumn{1}{l|}{\cellcolor[HTML]{b2ca9e}14.924} & \multicolumn{1}{c|}{\cellcolor[HTML]{b2ca9e}-}     & \multicolumn{1}{c|}{\cellcolor[HTML]{b2ca9e}-}     & \multicolumn{1}{c|}{\cellcolor[HTML]{b2ca9e}-}     & \multicolumn{1}{c|}{\cellcolor[HTML]{b2ca9e}-}     & \cellcolor[HTML]{b2ca9e}-                          \\ \hline
Simkumar4hard (Inter)                & \multicolumn{1}{l|}{\cellcolor[HTML]{e7b0b0}5.983} & \multicolumn{1}{l|}{\cellcolor[HTML]{e7b0b0}5.653} & \multicolumn{1}{l|}{\cellcolor[HTML]{e7b0b0}4.960} & \multicolumn{1}{l|}{\cellcolor[HTML]{e7b0b0}4.472} & \multicolumn{1}{c|}{\cellcolor[HTML]{e7b0b0}-}     & \multicolumn{1}{c|}{\cellcolor[HTML]{e7b0b0}-}     & \multicolumn{1}{c|}{\cellcolor[HTML]{e7b0b0}-}     & \multicolumn{1}{c|}{\cellcolor[HTML]{e7b0b0}-}     & \cellcolor[HTML]{e7b0b0}-                          \\ \hline
Simkumar4hard (Intra)                & \multicolumn{1}{l|}{\cellcolor[HTML]{b2ca9e}15.171} & \multicolumn{1}{l|}{\cellcolor[HTML]{b2ca9e}16.651} & \multicolumn{1}{l|}{\cellcolor[HTML]{b2ca9e}14.543} & \multicolumn{1}{l|}{\cellcolor[HTML]{b2ca9e}13.736} & \multicolumn{1}{c|}{\cellcolor[HTML]{b2ca9e}-}     & \multicolumn{1}{c|}{\cellcolor[HTML]{b2ca9e}-}     & \multicolumn{1}{c|}{\cellcolor[HTML]{b2ca9e}-}     & \multicolumn{1}{c|}{\cellcolor[HTML]{b2ca9e}-}     & \cellcolor[HTML]{b2ca9e}-                          \\ \hline
Simkumar8hard (Inter)                & \multicolumn{1}{l|}{\cellcolor[HTML]{e7b0b0}4.458} & \multicolumn{1}{l|}{\cellcolor[HTML]{e7b0b0}4.793} & \multicolumn{1}{l|}{\cellcolor[HTML]{e7b0b0}4.367} & \multicolumn{1}{l|}{\cellcolor[HTML]{e7b0b0}5.357} & \multicolumn{1}{l|}{\cellcolor[HTML]{e7b0b0}4.404} & \multicolumn{1}{l|}{\cellcolor[HTML]{e7b0b0}4.010} & \multicolumn{1}{l|}{\cellcolor[HTML]{e7b0b0}3.999} & \multicolumn{1}{l|}{\cellcolor[HTML]{e7b0b0}3.687} & \cellcolor[HTML]{e7b0b0}-                          \\ \hline

Simkumar8hard (Intra)                & \multicolumn{1}{l|}{\cellcolor[HTML]{b2ca9e}9.021} & \multicolumn{1}{l|}{\cellcolor[HTML]{b2ca9e}10.294} & \multicolumn{1}{l|}{\cellcolor[HTML]{b2ca9e}8.582} & \multicolumn{1}{l|}{\cellcolor[HTML]{b2ca9e}8.951} & \multicolumn{1}{l|}{\cellcolor[HTML]{b2ca9e}9.028} & \multicolumn{1}{l|}{\cellcolor[HTML]{b2ca9e}9.381} & \multicolumn{1}{l|}{\cellcolor[HTML]{b2ca9e}8.661} & \multicolumn{1}{l|}{\cellcolor[HTML]{b2ca9e}11.068} & \cellcolor[HTML]{b2ca9e}-                          \\ \hline
Trapnell (Inter)                     & \multicolumn{1}{l|}{\cellcolor[HTML]{e7b0b0}4.491} & \multicolumn{1}{l|}{\cellcolor[HTML]{e7b0b0}7.401} & \multicolumn{1}{l|}{\cellcolor[HTML]{e7b0b0}7.228} & \multicolumn{1}{c|}{\cellcolor[HTML]{e7b0b0}-}     & \multicolumn{1}{c|}{\cellcolor[HTML]{e7b0b0}-}     & \multicolumn{1}{c|}{\cellcolor[HTML]{e7b0b0}-}     & \multicolumn{1}{c|}{\cellcolor[HTML]{e7b0b0}-}     & \multicolumn{1}{c|}{\cellcolor[HTML]{e7b0b0}-}     & \cellcolor[HTML]{e7b0b0}-                          \\ \hline

Trapnell (Intra)                     & \multicolumn{1}{l|}{\cellcolor[HTML]{b2ca9e}9.202} & \multicolumn{1}{l|}{\cellcolor[HTML]{b2ca9e}10.249} & \multicolumn{1}{l|}{\cellcolor[HTML]{b2ca9e}10.122} & \multicolumn{1}{c|}{\cellcolor[HTML]{b2ca9e}-}     & \multicolumn{1}{c|}{\cellcolor[HTML]{b2ca9e}-}     & \multicolumn{1}{c|}{\cellcolor[HTML]{b2ca9e}-}     & \multicolumn{1}{c|}{\cellcolor[HTML]{b2ca9e}-}     & \multicolumn{1}{c|}{\cellcolor[HTML]{b2ca9e}-}     & \cellcolor[HTML]{b2ca9e}-                          \\ \hline
Zheng4eq (Inter)                     & \multicolumn{1}{l|}{\cellcolor[HTML]{e7b0b0}2.117} & \multicolumn{1}{l|}{\cellcolor[HTML]{e7b0b0}1.762} & \multicolumn{1}{l|}{\cellcolor[HTML]{e7b0b0}2.828} & \multicolumn{1}{l|}{\cellcolor[HTML]{e7b0b0}2.889} & \multicolumn{1}{c|}{\cellcolor[HTML]{e7b0b0}-}     & \multicolumn{1}{c|}{\cellcolor[HTML]{e7b0b0}-}     & \multicolumn{1}{c|}{\cellcolor[HTML]{e7b0b0}-}     & \multicolumn{1}{c|}{\cellcolor[HTML]{e7b0b0}-}     & \cellcolor[HTML]{e7b0b0}-                          \\ \hline
Zheng4eq (Intra)                     & \multicolumn{1}{l|}{\cellcolor[HTML]{b2ca9e}6.133} & \multicolumn{1}{l|}{\cellcolor[HTML]{b2ca9e}6.250} & \multicolumn{1}{l|}{\cellcolor[HTML]{b2ca9e}7.946} & \multicolumn{1}{l|}{\cellcolor[HTML]{b2ca9e}6.224} & \multicolumn{1}{c|}{\cellcolor[HTML]{b2ca9e}-}     & \multicolumn{1}{c|}{\cellcolor[HTML]{b2ca9e}-}     & \multicolumn{1}{c|}{\cellcolor[HTML]{b2ca9e}-}     & \multicolumn{1}{c|}{\cellcolor[HTML]{b2ca9e}-}     & \cellcolor[HTML]{b2ca9e}-                          \\ \hline
Zheng4uneq (Inter)                   & \multicolumn{1}{l|}{\cellcolor[HTML]{e7b0b0}2.059} & \multicolumn{1}{l|}{\cellcolor[HTML]{e7b0b0}1.753} & \multicolumn{1}{l|}{\cellcolor[HTML]{e7b0b0}2.870} & \multicolumn{1}{l|}{\cellcolor[HTML]{e7b0b0}2.176} & \multicolumn{1}{c|}{\cellcolor[HTML]{e7b0b0}-}     & \multicolumn{1}{c|}{\cellcolor[HTML]{e7b0b0}-}     & \multicolumn{1}{c|}{\cellcolor[HTML]{e7b0b0}-}     & \multicolumn{1}{c|}{\cellcolor[HTML]{e7b0b0}-}     & \cellcolor[HTML]{e7b0b0}-                          \\ \hline
Zheng4uneq (Intra)                   & \multicolumn{1}{l|}{\cellcolor[HTML]{b2ca9e}5.839} & \multicolumn{1}{l|}{\cellcolor[HTML]{b2ca9e}6.351} & \multicolumn{1}{l|}{\cellcolor[HTML]{b2ca9e}7.334} & \multicolumn{1}{l|}{\cellcolor[HTML]{b2ca9e}5.514} & \multicolumn{1}{c|}{\cellcolor[HTML]{b2ca9e}-}     & \multicolumn{1}{c|}{\cellcolor[HTML]{b2ca9e}-}     & \multicolumn{1}{c|}{\cellcolor[HTML]{b2ca9e}-}     & \multicolumn{1}{c|}{\cellcolor[HTML]{b2ca9e}-}     & \cellcolor[HTML]{b2ca9e}-                          \\ \hline
Zheng8eq (Inter)                      & \multicolumn{1}{l|}{\cellcolor[HTML]{e7b0b0}1.981} & \multicolumn{1}{l|}{\cellcolor[HTML]{e7b0b0}2.922} & \multicolumn{1}{l|}{\cellcolor[HTML]{e7b0b0}1.655} & \multicolumn{1}{l|}{\cellcolor[HTML]{e7b0b0}1.936} & \multicolumn{1}{l|}{\cellcolor[HTML]{e7b0b0}2.566} & \multicolumn{1}{l|}{\cellcolor[HTML]{e7b0b0}2.594} & \multicolumn{1}{l|}{\cellcolor[HTML]{e7b0b0}2.802} & \multicolumn{1}{l|}{\cellcolor[HTML]{e7b0b0}2.726} & \cellcolor[HTML]{e7b0b0}-                          \\ \hline
Zheng8eq (Intra)                     & \multicolumn{1}{l|}{\cellcolor[HTML]{b2ca9e}4.306} & \multicolumn{1}{l|}{\cellcolor[HTML]{b2ca9e}4.533} & \multicolumn{1}{l|}{\cellcolor[HTML]{b2ca9e}4.540} & \multicolumn{1}{l|}{\cellcolor[HTML]{b2ca9e}4.996} & \multicolumn{1}{l|}{\cellcolor[HTML]{b2ca9e}4.253} & \multicolumn{1}{l|}{\cellcolor[HTML]{b2ca9e}5.598} & \multicolumn{1}{l|}{\cellcolor[HTML]{b2ca9e}5.243} & \multicolumn{1}{l|}{\cellcolor[HTML]{b2ca9e}4.302} & \cellcolor[HTML]{b2ca9e}-                          \\ \hline
\end{tabular}
\end{center}
\caption{Community-wise Inter and Intra-Community Compression Ratios}
\label{Community-Wise-Ratios}
\end{table}

\subsection{NMI and purity index improvement for PCA-dim=$k-1$}
\label{sec: appendix: NMI improvement}

Now, we continue with providing more experimental results. First, we note down the NMI improvement when $5\%$ of the points are removed in the setting of PCA dimension= $k-1$. 
\begin{figure}[ht]
    \centering
    \includegraphics[scale=0.55]{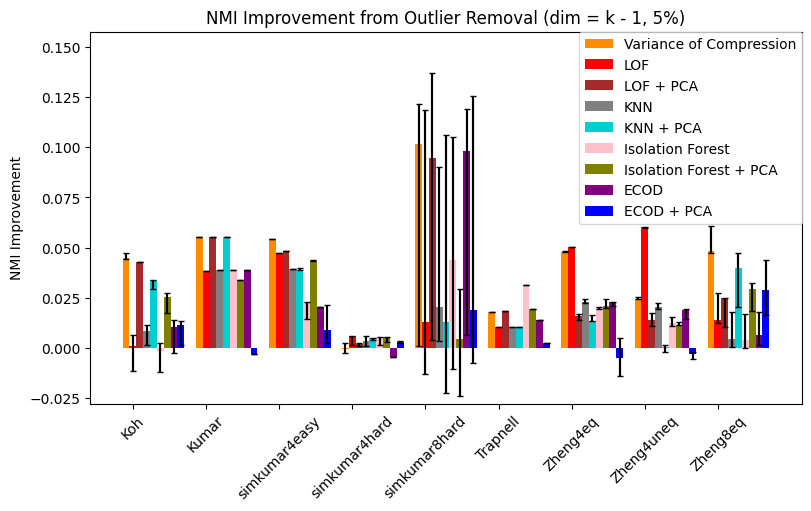}
    \caption{NMI improvement via removing $5\%$ points}
    \label{fig:nmi-improvement-5}
\end{figure}

Next, we add the initial purity scores when running PCA( dimension=$k-1$)+K-means on the datasets in Table \ref{Purity-original}. 

\begin{table}[ht]
\begin{center}
\begin{tabular}{|c|c|}
\hline \rowcolor{white}
\hline \rowcolor{white}
\makecell{Dataset}
&\makecell{Purity of PCA + k-means}\\ \hline
Koh   &  0.895  \\ \hline
Kumar   &  0.983 \\ \hline
Simkumar4easy  &  0.918  \\ \hline
Simkumar4hard    &  0.563  \\ \hline
Simkumar8hard   &  0.667 \\ \hline
Trapnell    &  0.604  \\ \hline
Zheng4eq    &  0.715  \\ \hline
Zheng4uneq  &  0.873   \\ \hline
Zheng8eq   &  0.568   \\ \hline

\end{tabular}
\end{center}
\caption{Purity index before data removal (PCA dim = $k - 1$)}
\label{Purity-original}
\end{table}

Then the improvement in purity index due to $5\%$ and $10\%$ points removal are recorded in Figures \ref{fig:purity-improvement-5} and \ref{fig:purity-improvement-10}. 

\begin{figure}[ht]
    \centering
    \includegraphics[scale=0.55]{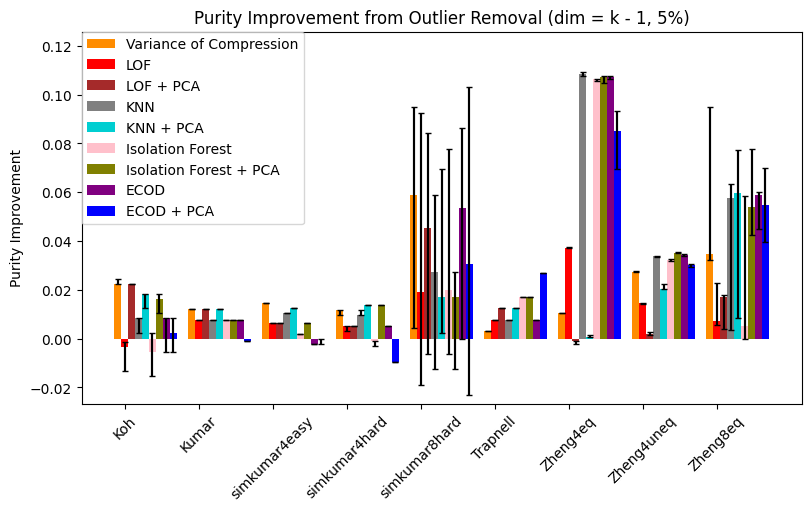}
    \caption{Purity score improvement via $5\%$ outlier removal}
    \label{fig:purity-improvement-5}
\end{figure}
\begin{figure}[ht]
    \centering
    \includegraphics[scale=0.55]{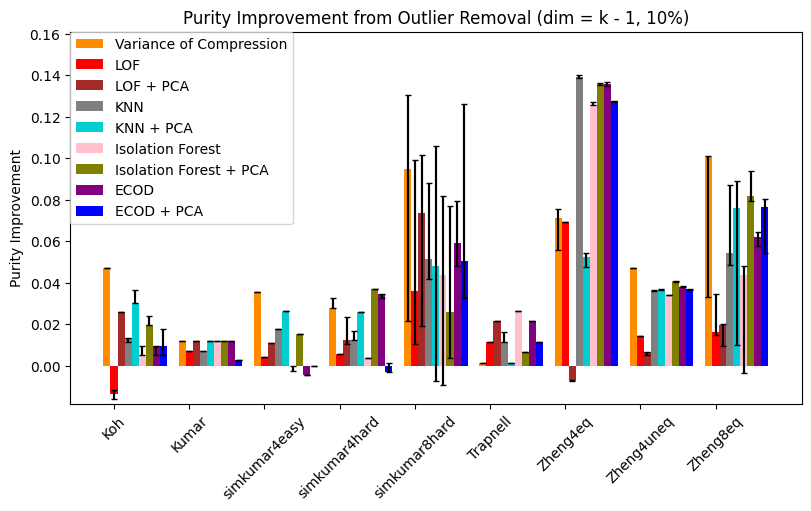}
    \caption{Purity score improvement via $10\%$ outlier removal}
    \label{fig:purity-improvement-10}
\end{figure}


\subsection{Different PCA dimension choice}
\label{sec: appendix: pc-choice}
Finally, we show that our experiments on real-world data,  both for average compression as well as clustering accuracy improvement through outlier detection, are fairly stable to a change in the PCA dimension. 
The NMI and purity index baselines can be found in Tables \ref{NMI-baseline-2k} and \ref{Purity-baseline-2k} respectively.

\begin{table}
\parbox{.45\linewidth}{
\centering
\begin{tabular}{|c|c|}
\hline \rowcolor{white}
\hline \rowcolor{white}
\makecell{Dataset}
&\makecell{NMI of PCA + k-means}\\ \hline
Koh   &  0.861  \\ \hline
Kumar   &  0.924 \\ \hline
Simkumar4easy  &  0.744  \\ \hline
Simkumar4hard    &  0.235  \\ \hline
Simkumar8hard   &  0.440 \\ \hline
Trapnell    &  0.293  \\ \hline
Zheng4eq    &  0.710  \\ \hline
Zheng4uneq  &  0.724   \\ \hline
Zheng8eq   &  0.560   \\ \hline
\end{tabular}
\caption{NMI before data removal (PCA dim = $2k$)}
\label{NMI-baseline-2k}
}
\hfill
\parbox{.45\linewidth}{
\centering
\begin{tabular}{|c|c|}
\hline \rowcolor{white}
\hline \rowcolor{white}
\makecell{Dataset}
&\makecell{Purity of PCA + k-means}\\ \hline
Koh   &  0.898  \\ \hline
Kumar   &  0.984 \\ \hline
Simkumar4easy  &  0.910  \\ \hline
Simkumar4hard    &  0.561  \\ \hline
Simkumar8hard   &  0.658 \\ \hline
Trapnell    &  0.608  \\ \hline
Zheng4eq    &  0.720  \\ \hline
Zheng4uneq  &  0.878   \\ \hline
Zheng8eq   &  0.574   \\ \hline
\end{tabular}
\caption{Purity score before data removal (PCA dim = $2k$)}
\label{Purity-baseline-2k}
}
\end{table}

For brevity, we show the improvement in NMI and purity index for $10\%$ point removal in Figures~\ref{fig:NMI-2k-10} and \ref{fig:purity-2k-10} respectively. As one can observe, our method continues to be the most consistent, being the best method in most datasets. Indeed, in this case our performance is even comparatively better than in the case of PCA-dimension=$k-1$.

\begin{figure}[ht]
    \centering
    \includegraphics[scale=0.6]{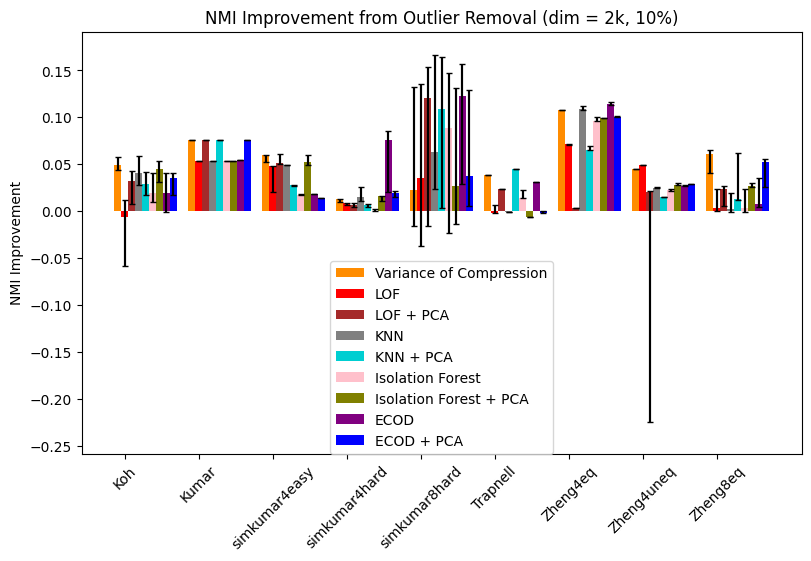}
    \caption{NMI improvement via removing $10\%$ points when PCA dimension is $2k$}
    \label{fig:NMI-2k-10}
\end{figure}

\begin{figure}[ht]
    \centering
    \includegraphics[scale=0.6]{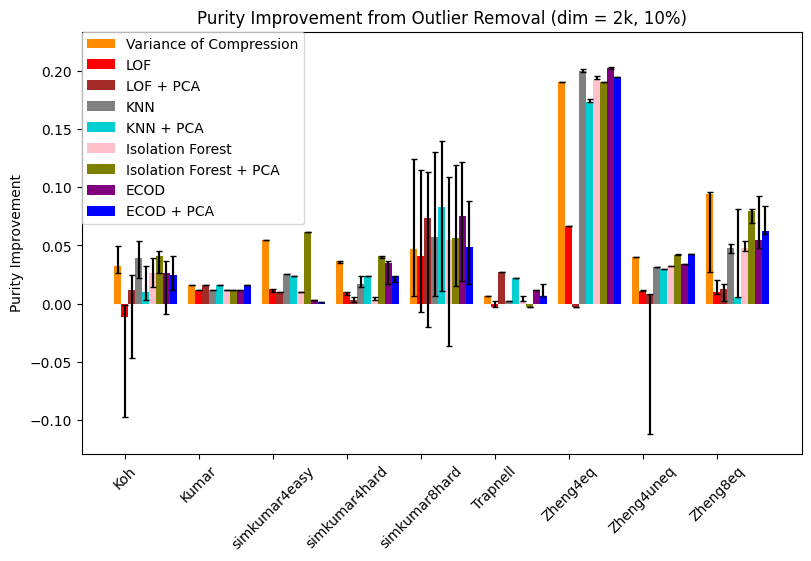}
    \caption{Purity score improvement via removing $10\%$ points when PCA dimension is $2k$}
    \label{fig:purity-2k-10}
\end{figure}

\section{Future directions}
\label{sec: future}
In this paper, we have quantified PCA's denoising effect in high dimensional noisy data with underlying community structure via the metric of compression ratio. As an application, we have designed an outlier detection method that improves the community structure of datasets. We note two interesting theoretical and algorithmic questions.

i) Providing a more tight bound on the compression ratio seems an exciting and hard direction.

ii) Using compression ratio as a metric for clustering algorithms also seems an interesting direction, especially for single-cell-RNA-seq datasets.

\end{document}